\documentclass[pmlr]{jmlr}


\usepackage{amsmath,amsfonts,bm}




\def\Figref#1{Figure~\ref{#1}}


\def\Secref#1{Section~\ref{#1}}


\def\eqref#1{equation~\ref{#1}}
\def\Eqref#1{Equation~\ref{#1}}








\def\1{\bm{1}}








\def\vb{{\bm{b}}}

\def\vh{{\bm{h}}}

\def\vx{{\bm{x}}}

\def\vz{{\bm{z}}}



\def\mW{{\bm{W}}}
\def\mX{{\bm{X}}}

\DeclareMathAlphabet{\mathsfit}{\encodingdefault}{\sfdefault}{m}{sl}
\SetMathAlphabet{\mathsfit}{bold}{\encodingdefault}{\sfdefault}{bx}{n}














\usepackage{longtable}

 %
 \usepackage{booktabs}
 
\usepackage[load-configurations=version-1]{siunitx} 

\usepackage{float}
\usepackage{makecell}
\usepackage{multirow}
\usepackage{colortbl}
\usepackage[bottom]{footmisc}
\usepackage{wrapfig}
\usepackage{xr}

\makeatletter
\def\set@curr@file#1{\def\@curr@file{#1}} 
\makeatother


\makeatletter
\def\moverlay{\mathpalette\mov@rlay}
\def\mov@rlay#1#2{\leavevmode\vtop{%
   \baselineskip\z@skip \lineskiplimit-\maxdimen
   \ialign{\hfil$\m@th#1##$\hfil\cr#2\crcr}}}
\newcommand{\charfusion}[3][\mathord]{
    #1{\ifx#1\mathop\vphantom{#2}\fi
        \mathpalette\mov@rlay{#2\cr#3}
      }
    \ifx#1\mathop\expandafter\displaylimits\fi}
\makeatother
\newcommand{\cupdot}{\charfusion[\mathbin]{\cup}{\cdot}}

\newcommand{\tentry}[2]{#1{$\pm$#2}}

\newcount\tmpnum \newdimen\tmpdim
{\lccode`\?=`\p \lccode`\!=`\t  \lowercase{\gdef\ignorept#1?!{#1}}}
\edef\widecharS{\expandafter\ignorept\the\fontdimen1\textfont1}

\def\widebar#1{\futurelet\next\widebarA#1\widebarA}
\def\widebarA#1\widebarA{%
   \def\tmp{0}\ifcat\noexpand\next A\def\tmp{1}\fi
   \widebarE
   \ifdim\tmp pt=0pt \overline{#1}%
   \else {\mathpalette\widebarB{#1}}\fi
}
\def\widebarB#1#2{%
   \setbox0=\hbox{$#1\overline{#2}$}%
   \tmpdim=\tmp\ht0 \advance\tmpdim by-.4pt
   \tmpdim=\widecharS\tmpdim
   \kern\tmpdim\overline{\kern-\tmpdim#2}%
}
\def\widebarC#1#2 {\ifx#1\end \else 
   \ifx#1\next\def\tmp{#2}\widebarD 
   \else\expandafter\expandafter\expandafter\widebarC
   \fi\fi
}
\def\widebarD#1\end. {\fi\fi}
\def\widebarE{\widebarC A1.4 J1.2 L.6 O.8 T.5 U.7 V.3 W.1 Y.2 
   a.5 b.2 d1.1 h.5 i.5 k.5 l.3 m.4 n.4 o.6 p.4 r.5 t.4 v.7 w.7 x.8 y.8
   \alpha1 \beta1 \gamma.6 \delta.8 \epsilon.8 \varepsilon.8 \zeta.6 \eta.4
   \theta.8 \vartheta.8 \iota.5 \kappa.8 \lambda.5 \mu1 \nu.5 \xi.7 \pi.6
   \varpi.9 \rho1 \varrho1 \sigma.7 \varsigma.7 \tau.6 \upsilon.7 \phi1
   \varphi.6 \chi.7 \psi1 \omega.5 \cal1 \end. }

\newcommand{\Mod}[1]{\ (\mathrm{mod}\ #1)}

\definecolor{Gainsboro}{HTML}{DCDCDC}
\definecolor{ForestGreen}{HTML}{009a55}
\definecolor{BrickRed}{HTML}{b6311e}
\definecolor{SunYellow}{HTML}{feec71}
\definecolor{DeepBlue}{HTML}{586e8b}
\definecolor{LegendaryBlue}{HTML}{3b7ab5}
\definecolor{LegendaryOrange}{HTML}{ff8225}
\definecolor{LegendaryGreen}{HTML}{3caf50}
\definecolor{LegendaryPink}{HTML}{fc7fbd}
\definecolor{VennOrange}{HTML}{ec7c30}
\definecolor{VennBlue}{HTML}{2e5496}
\definecolor{VennGreen}{HTML}{538235}
\definecolor{CentreBlue}{HTML}{2e5496}
\definecolor{CentreGreen}{HTML}{538235}
\definecolor{CentreOrange}{HTML}{c55a11}
\definecolor{CentreYellow}{HTML}{be9000}
\definecolor{WorkflowBlue}{HTML}{093b92}
\definecolor{WorkflowGreen}{HTML}{929f00}
\definecolor{WorkflowOrange}{HTML}{9f5800}





\jmlrvolume{182}
\jmlryear{2022}
\jmlrworkshop{Machine Learning for Healthcare}


%
%
%
%
\title[Debiasing Deep Chest X-Ray Classifiers]{Debiasing Deep Chest X-Ray Classifiers using \\Intra- and Post-processing Methods}

\author{\Name{Ri\v{c}ards Marcinkevi\v{c}s}
       \Email{ricards.marcinkevics@inf.ethz.ch} 
       \AND
       \Name{Ece Ozkan}
       \Email{ece.oezkanelsen@inf.ethz.ch} 
       \AND
       \Name{Julia E. Vogt}
       \Email{julia.vogt@inf.ethz.ch}
       \AND
       \addr Department of Computer Science\\
       ETH Zurich\\
       Zurich, Switzerland} 


\begin{document}

\maketitle

\begin{abstract}
    Deep neural networks for image-based screening and computer-aided diagnosis have achieved expert-level performance on various medical imaging modalities, including chest radiographs. 
    Recently, several works have indicated that these state-of-the-art classifiers can be biased with respect to sensitive patient attributes, such as race or gender, leading to growing concerns about demographic disparities and discrimination resulting from algorithmic and model-based decision-making in healthcare. 
    Fair machine learning has focused on mitigating such biases against disadvantaged or marginalised groups, mainly concentrating on tabular data or natural images.
    This work presents two novel intra-processing techniques based on fine-tuning and pruning an already-trained neural network. These methods are simple yet effective and can be readily applied \emph{post hoc} in a setting where the protected attribute is unknown during the model development and test time. In addition, we compare several intra- and post-processing approaches applied to debiasing deep chest X-ray classifiers.  
    To the best of our knowledge, this is one of the first efforts studying debiasing methods on chest radiographs. Our results suggest that the considered approaches successfully mitigate biases in fully connected and convolutional neural networks offering stable performance under various settings. The discussed methods can help achieve group fairness of deep medical image classifiers when deploying them in domains with different fairness considerations and constraints.
\end{abstract}

\section{Introduction}
Chest X-ray imaging is an essential tool for screening and diagnosing conditions affecting the chest and its surrounding,  requiring special training for an appropriate interpretation. There has been an increasing effort in deploying deep neural networks for image-based screening and computer-aided diagnosis on chest radiographs \citep{Allaouzi2019,Cohen2020,Bressem2020} from various datasets \citep{Rajpurkar2017,Wang2019,mimic-cxr}, with some models achieving an expert-level performance \citep{Irvin2019}. However, several works \citep{Larrazabal2020,SeyyedKalantari2020,SeyyedKalantari2021} have shown that these classifiers may be biased, raising ethical concerns regarding ML systems involved in high-stakes decisions \citep{Char2018,Wiens2019,Obermeyer2019}. For instance, an ICU patient monitoring and management model trained on a dataset containing few patients from minority groups might suffer from under- or over-detection of events in these groups, leading to alarm fatigue among medical staff and disparate patient outcomes \citep{Rajkomar2018}. 

\begin{wrapfigure}[19]{O}{6.5cm}
    \vspace{-0.25cm}
    \centering
    \includegraphics[width=0.95\linewidth]{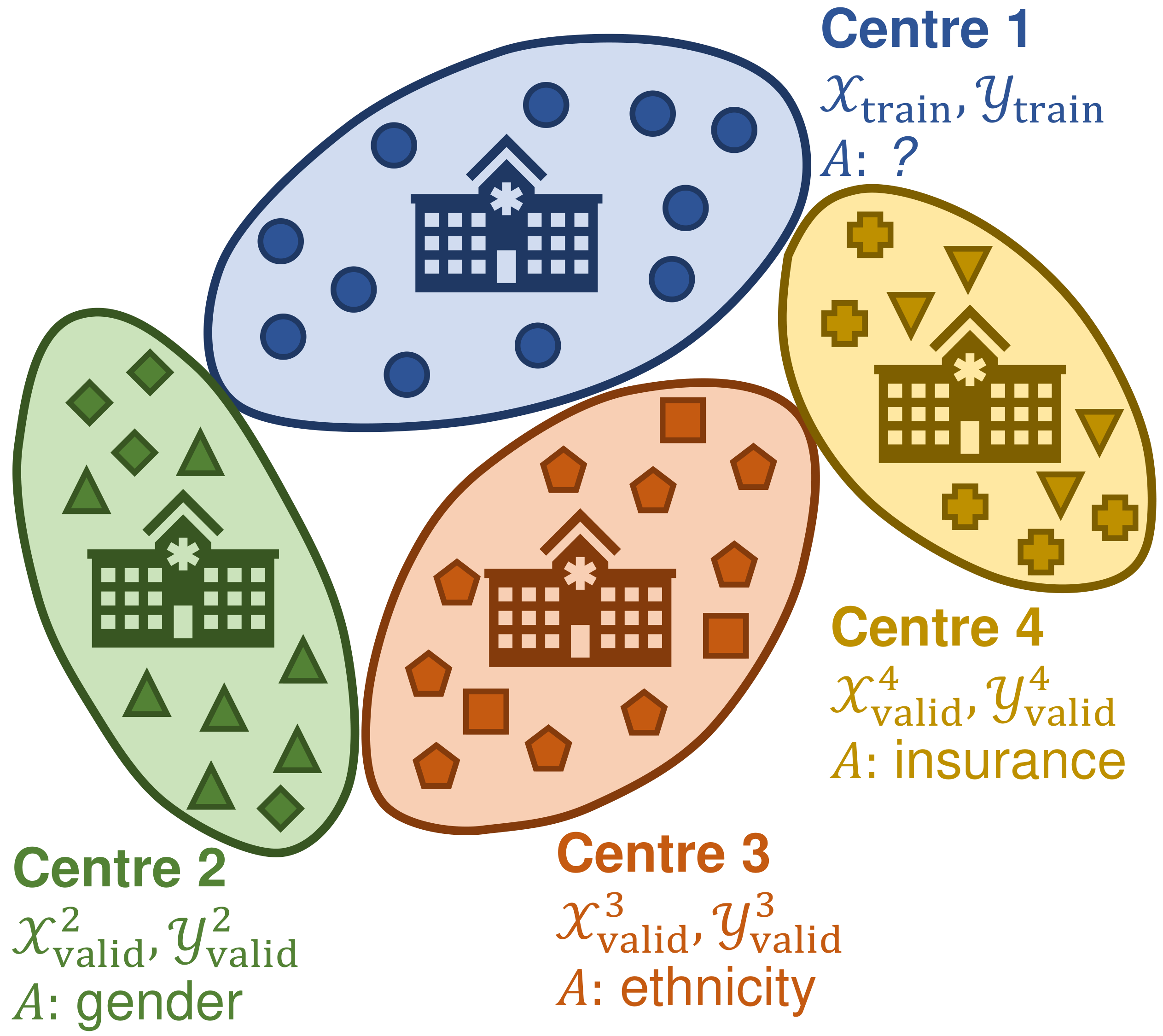}
    \caption{The intra-processing setting. A model is trained on centre {\color{CentreBlue}\textbf{1}}, and debiased on centres {\color{CentreGreen}\textbf{2}}, {\color{CentreOrange}\textbf{3}}, and {\color{CentreYellow}\textbf{4}} that have different fairness constraints ($A$).\label{fig:setting_motivation}}
\end{wrapfigure} Motivated by similar concerns, researchers have provided many solutions for adjusting models' outputs and directly incorporating fairness into the learning process \citep{Kearns2017}. In this paper, we will assess the fairness of neural networks from the perspective of classification parity \citep{CorbettDavies2018}: a classifier is said to be fair if some derivative of its confusion matrix, for instance, the true positive rate (TPR), is even across the categories of the protected attribute, such as race or gender. A practical scenario of mitigating bias w.r.t. protected attributes could be as follows. Consider deploying a predictive neural-network-based model in several clinical centres with different demographics, e.g. as explored by \citet{Zech2018} for chest X-ray classification. The constraints on the bias and protected attribute of interest might vary across clinical centres due to different population demographics (see \Figref{fig:setting_motivation}). Therefore, it might be more practical to debias the original model based on the local data, following an intra- or post-processing approach \citep{Bellamy2018,Savani2020}. The setting above is even more relevant with the widespread availability and use of pre-trained models \citep{Gupta2018,Raghu2019,Rasmy2021}.

Several prior works on debiasing classifiers, \mbox{i.e.} minimising bias, have mainly concentrated on tabular data or natural images, e.g. by \citet{Zafar2017,Zhang2018,Kim2019LearningNotToLearn,Reimers2021}, and assumed that bias constraints are known before or during training. In that case, the model's inputs could be transformed or reweighed, or bias constraints could be incorporated directly into the loss function. Another line of work has focused on a completely model-agnostic approach \citep{Kamiran2012,Hardt2016}, adjusting the model's predictions \emph{post hoc} and requiring the knowledge of the protected attribute at test time. On the other hand, the intra-processing scenario emerges naturally when potential biases are unknown or unexplored at the model development time, in other words, when it is impractical or impossible to train a debiased classifier from scratch, as described above, and when the protected attribute is unavailable at test time. Nevertheless, similar to most methods for enforcing classification parity, intra-processing does require the protected attribute \emph{during debiasing}. Although some techniques can be applied when the sensitive attribute is \emph{entirely} unknown, the current work focuses on a different setting.

We propose two novel intra-processing debiasing techniques based on fine-tuning and pruning. Furthermore, we compare several previously proposed debiasing approaches applied to deep chest X-ray classifiers in terms of statistical parity \citep{Besse2021} and equality of opportunity \citep{Hardt2016}. We believe that this is one of the first works comparing debiasing methods on chest X-ray images. 
We exploit the publicly available and widely used MIMIC-CXR (Medical Information Mart for Intensive Care -- Chest X-Ray) dataset \citep{mimic-cxr}. Classifiers trained on this dataset have been shown to be biased w.r.t. various sensitive attributes, such as gender, race, or insurance type \citep{SeyyedKalantari2020,SeyyedKalantari2021}.

\subsection*{Generalizable Insights about Machine Learning in the Context of Healthcare}

The main contributions of this work are as follows. \mbox{(\emph{i}) We} consider differentiable proxy functions for statistical parity and equality of opportunity and establish their correspondence to the covariance between the decision boundary of a neural network and the protected attribute. \mbox{(\emph{ii}) We} introduce simple yet effective intra-processing debiasing procedures based on minimising the proxy functions via fine-tuning and pruning an already-trained neural network, which can be effective in a setting where the protected attribute is unknown during the model development and test time. \mbox{(\emph{iii}) We} conduct a comprehensive comparison among the proposed and well-established debiasing approaches on fully connected and convolutional neural networks on several datasets, including chest X-rays, showing that the compared methods can help achieve group fairness when deploying them in domains with different fairness considerations and constraints.

\section{Preliminaries}

Below, we outline the setting considered throughout the paper. We assume that disjoint training, validation, and test datasets $\mathcal{D}=\left\{\left(\vx_i,y_i,a_i\right)\right\}_{i}=\mathcal{D}_{\mathrm{train}}\cupdot\mathcal{D}_{\mathrm{valid}}\cupdot\mathcal{D}_{\mathrm{test}}$ are given, where $\vx_i$ are features, e.g. a $p$-dimensional vector or an image, $y_i\in\{0,1\}$ is the label, and $a_i\in\{0,1\}$ is the protected attribute. Attribute $a_i$ may be present among the features in $\vx_i$ or may be completely exogenous. We will use the capital letters $\mX,\,Y,$ and $A$ to refer to the corresponding random variables. Furthermore, let $\mathcal{X}=\left\{\vx_i\right\}_i$, $\mathcal{Y}=\left\{y_i\right\}_i$, and $\mathcal{A}=\left\{a_i\right\}_i$. Let $f_{\boldsymbol{\theta}}(\cdot)$ denote a neural network parameterised by $\boldsymbol{\theta}$ and trained on data points $\left\{\left(\vx_i,y_i\right)\right\}_i$ from $\mathcal{D}_{\mathrm{train}}$. In our experiments (see \Secref{sec:expsetup}), we consider fully connected and convolutional architectures for $f_{\boldsymbol{\theta}}(\cdot)$. If  $f_{\boldsymbol{\theta}}(\cdot)$ is a multilayer perceptron, $\boldsymbol{\theta}$ is given by weight matrices $\left\{\mW^{\mathrm{in}},\mW^1,...,\mW^L,\mW^{\mathrm{out}}\right\}$.
We will use $\vz^l\left(\vx\right)$ for the pre-activations and $\vh^l(\vx)=\sigma\left(\vz^l(\vx)\right)$ for activations in layer $1\leq l \leq L$, at the input $\vx$, where $\sigma(\cdot)$ is an activation function. The output of $f_{\boldsymbol{\theta}}(\cdot)$ is given by $\mathrm{sigmoid}\left(\mW^{\mathrm{out}}\vh^L(\vx)\right)$. For final classification, a threshold $t\in\left[0,\,1\right]$ on the output is chosen by maximising some performance measure, e.g. accuracy, on held-out data $\mathcal{D}_{\mathrm{valid}}$. Thus, for input $\vx$, the prediction is $\hat{y}=\boldsymbol{1}_{\left\{f_{\boldsymbol{\theta}\left(\vx\right)\geq t}\right\}}$, where $\boldsymbol{1}_{\left\{\cdot\right\}}$ is the indicator function.

\section{Background and Related Work}

\paragraph{Classification Parity}

Many criteria for the fairness of machine learning models have been considered so far \citep{CorbettDavies2018}. The two most common and practical classification parity metrics are statistical parity and equality of opportunity. Statistical parity difference (SPD) \citep{Savani2020,Besse2021} is defined as the difference between the probabilities of positive outcomes, i.e. predictions made by the model $f_{\boldsymbol{\theta}}(\cdot)$, across the groups of the protected attribute $A$:
\begin{equation}
    \mathrm{SPD}=\mathbb{P}_{\mX,A}\left(\widehat{Y}=1\middle|A=0\right)-\mathbb{P}_{\mX,A}\left(\widehat{Y}=1\middle|A=1\right).
   \label{eqn:spd}
\end{equation} 
On the other hand, the equal opportunity difference (EOD) \citep{Hardt2016,Savani2020} quantifies the discrepancy between the TPRs of the classifier $f_{\boldsymbol{\theta}}(\cdot)$: 
\begin{equation}
    \mathrm{EOD}=\mathbb{P}_{\mX,Y,A}\left(\widehat{Y}=1\middle|Y=1,A=0\right)-\mathbb{P}_{\mX,Y,A}\left(\widehat{Y}=1\middle|Y=1,A=1\right).
    \label{eqn:eod}
\end{equation}
In practice, quantities from Equations~\ref{eqn:spd} and \ref{eqn:eod} can be evaluated using empirical estimators on held-out test data.

\paragraph{Debiasing}

Minimisation of the SPD or EOD is a solvable technical problem. Debiasing, i.e. the minimisation of bias, often leads to a decrease in the overall predictive performance of the classifier \citep{Reimers2021}. Therefore, ideally, a debiasing algorithm should reduce bias $\mu(\cdot)$, given by the SPD or EOD, without sacrificing performance $\rho(\cdot)$, e.g. balanced accuracy (BA) \citep{Brodersen2010}.
One can view this problem as an instance of constrained optimisation \citep{Zafar2017,Zafar2019,Kim2019,Savani2020}, either minimising the bias subject to performance constraints or \emph{vice versa}, maximising performance under bias constraints. 

Many debiasing algorithms have been proposed for the setting outlined above. \citet{Bellamy2018} and \citet{Savani2020} provide a practical taxonomy: \mbox{(\emph{i}) pre-processing} algorithms usually reweigh or transform original data, obfuscating protected variables or attenuating group disparities \citep{Kamiran2011,Zemel2013,Calmon2017,Celis2020}; \mbox{(\emph{ii}) in-processing} methods incorporate debiasing explicitly into learning, e.g. using an adversarial loss or regularisation \citep{Kamishima2012,Zafar2017,Zhang2018,Reimers2021}; \mbox{(\emph{iii}) post-processing} approaches treat the biased model as a black-box and merely edit its predictions \citep{Kamiran2012,Hardt2016,Pleiss2017}; last but not least, \mbox{(\emph{iv}) intra-processing} techniques are inspired by fine-tuning and achieve parity by changing the model's parameters \emph{post hoc} \citep{Savani2020}. An essential difference between post- and intra-processing are assumptions about the access to model parameters and the protected attribute at test time: post-processing adjusts \emph{predictions} based on the given protected attribute value. 

All of the methods mentioned above assume the knowledge of the protected attribute at some point in the model's life cycle. Another line of work \citep{Nam2020LfF,Lee2021}, beyond the scope of the current paper, focuses on the setting wherein the source of bias is \emph{entirely} unknown, usually resorting to strong assumptions, such as that the bias is easier to learn than other relevant associations. 

\paragraph{Pruning}

In neural networks, parameter pruning usually refers to removing irrelevant weights or entire structural elements \citep{Cheng2017}, e.g. filters in convolutional neural networks. Early works on pruning neural networks, such as optimal brain damage \citep{LeCun1990} and optimal brain surgeon \citep{Hassibi1993}, leveraged criteria based on the second derivative of the error function to prune unimportant weights throughout the training process. Several modern techniques focus on pruning entire structures \citep{Wen2016,Molchanov2016,He2017}, e.g. convolutional filters or channels. However, the main principle remains the same: parameters are pruned based on some criterion, and the network is subsequently fine-tuned by backpropagation, if necessary.

\paragraph{Role of Individual Units in Neural Networks}

Several works have investigated the importance and interpretation of \emph{individual} neurons within deep neural network models, in contrast to the previous research on attribution, which primarily examined input-output relationships \citep{Ancona2019}. For instance, \citet{Bau2020} observed the emergence of single-unit object detectors whose activations are correlated with high-level concepts in discriminative and generative convolutional neural networks (CNN). \citet{Leino2018, Dhamdhere2018, Srinivas2019, Nam2020} introduce new attribution measures that quantify the influence of individual neurons.

\paragraph{Fairness of Deep Chest X-ray Classifiers}
Recently, researchers have scrutinised the fairness of deep classifiers trained on well-known and publicly available chest X-ray datasets \citep{mimic-cxr,Wang2019,Irvin2019}. 
\citet{Larrazabal2020} reported a consistently lower AUROC for underrepresented genders on imbalanced datasets. 
In a multi-centre setting, \citet{Zech2018} observed that the performance of chest X-ray classifiers was significantly lower on held-out external data, indicating possible bias due to confounding. 
Underdiagnosis and TPR disparity were evaluated by \citet{SeyyedKalantari2020, SeyyedKalantari2021} across three large chest X-ray datasets, showing higher underdiagnosis and lower TPRs in underserved patient populations.

\section{Methods\label{sec:methods}}

We introduce novel intra-processing approaches to debiasing classifiers w.r.t. the SPD and EOD, which build on the work by \citet{Savani2020}, who have proposed the intra-processing setting. For an extended comparison with the related works, see \Secref{sec:discussion}. Our techniques are tailored towards differentiable classifiers and neural networks in particular.

\subsection{Classification Parity Proxies}
\label{subsec:class_parity_proxies}

The proposed methods focus on the minimisation of classification disparity.
In particular, we minimise the SPD or EOD \emph{directly} without a need for adversarial training using differentiable proxy functions. Given sets of $N$ data points $\mathcal{X}=\left\{\vx_i\right\}_{i=1}^N$, $\mathcal{Y}=\left\{y_i\right\}_{i=1}^N$, and $\mathcal{A}=\left\{a_i\right\}_{i=1}^N$, the proxy $\tilde{\mu}$ for the SPD is given by
\begin{align}
    \tilde{\mu}_{\mathrm{SPD}}\left(f_{\boldsymbol{\theta}}(\cdot),\,\mathcal{X},\,\mathcal{Y},\,\mathcal{A}\right)=\frac{\sum_{i=1}^Nf_{\boldsymbol{\theta}}\left(\vx_i\right)\left(1-a_i\right)}{\sum_{i=1}^N1-a_i} - \frac{\sum_{i=1}^Nf_{\boldsymbol{\theta}}\left(\vx_i\right)a_i}{\sum_{i=1}^Na_i},
    \label{eqn:spd_}
\end{align}
and, for the EOD, we have
\begin{align}
    \tilde{\mu}_{\mathrm{EOD}}\left(f_{\boldsymbol{\theta}}(\cdot),\,\mathcal{X},\,\mathcal{Y},\,\mathcal{A}\right)=\frac{\sum_{i=1}^Nf_{\boldsymbol{\theta}}\left(\vx_i\right)\left(1-a_i\right)y_i}{\sum_{i=1}^N\left(1-a_i\right)y_i} - \frac{\sum_{i=1}^Nf_{\boldsymbol{\theta}}\left(\vx_i\right)a_iy_i}{\sum_{i=1}^Na_iy_i}.
    \label{eqn:eod_}
\end{align}
Notably, Equations~\ref{eqn:spd_} and \ref{eqn:eod_} are similar to the objective functions considered by \citet{Zafar2017,Zafar2019} in the context of fair logistic regression and SVM models. In Appendix~\ref{app:boundary}, we show that \Eqref{eqn:spd_} corresponds to the empirical estimate $\widehat{\mathrm{Cov}}\left(A,\,f_{\boldsymbol{\theta}}\left(\mX\right)\right)$. Similarly, \Eqref{eqn:eod_} corresponds to the empirical estimate of the conditional covariance $\widehat{\mathrm{Cov}}\left(A,\,f_{\boldsymbol{\theta}}\left(\mX\right)\,\middle|\,Y=1\right)$. The intuition behind the algorithms described in Sections~\ref{sec:pruning} and \ref{sec:biasgda} is to fine-tune the given biased neural network and minimise these proxies, thus, reducing the covariance between the protected attribute and decision boundary. The general debiasing procedure is schematically summarised in \Figref{fig:pipeline}: an already-trained network is debiased on held-out validation data, using the classification parity proxies, and can produce unbiased predictions without the protected attribute at test time.

\begin{figure}[H]
    \centering
    \includegraphics[width=0.6\linewidth]{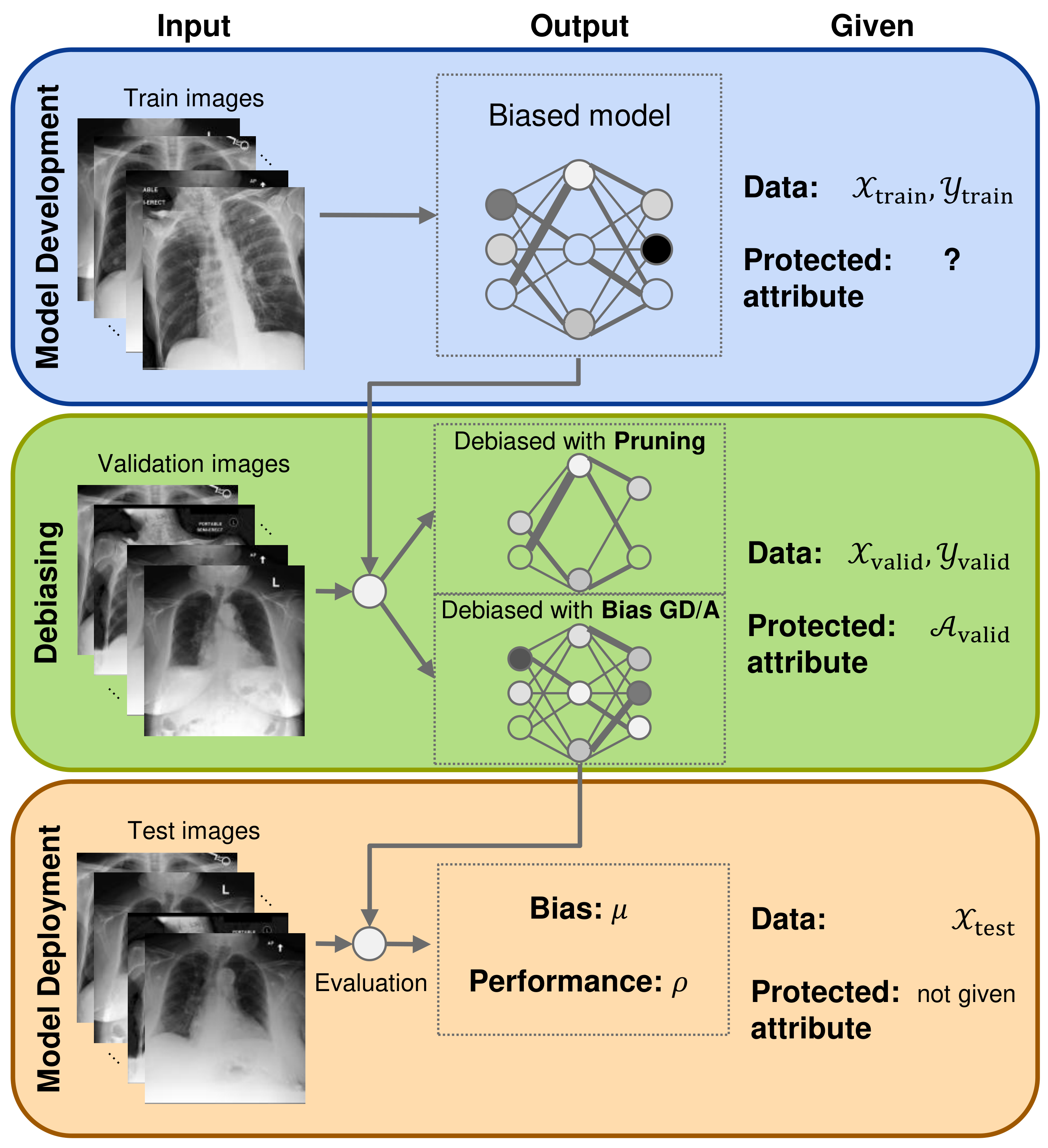}
    \caption{A summary of the debiasing procedure. \mbox{\textbf{\color{WorkflowBlue}(\emph{i})} A} biased model is trained without the knowledge of the protected attribute. \mbox{\textbf{\color{WorkflowGreen}(\emph{ii})} Using} the differentiable classification parity proxies, the model is debiased by performing pruning or bias gradient descent/ascent on validation data. \mbox{\textbf{\color{WorkflowOrange}(\emph{iii})} The} debiased model is evaluated on test data and can produce unbiased predictions \emph{without} the protected attribute.} 
    \label{fig:pipeline}
\end{figure}

\subsection{Neural Network Pruning for Debiasing\label{sec:pruning}}

Pruning refers to the procedure of reducing the effective number of parameters in a model. There has been renewed interest in neural network pruning \citep{Cheng2017,Blalock2020}, mainly for compressing models and reducing computational complexity and energy consumption. Different from the existing work, we propose using pruning to mitigate bias in neural network classifiers. In particular, we introduce a procedure for pruning individual units, or neurons, based on their contributions to classification disparity. In fully connected (FC) layers, a unit is a single component of the (pre-)activation vector; in convolutional layers, it is a component of the three-dimensional tensor. Below we use a one-dimensional index $j$ to enumerate units for both FC and convolutional layers.

\paragraph{Gradient-based Bias Influence}
Building on the influence-directed explanations proposed by \citet{Leino2018} for measuring the influence of individual neurons in CNNs, we propose a gradient-based statistic for quantifying the influence of units on the classification disparity. For a differentiable bias measure $\tilde{\mu}$, e.g. \Eqref{eqn:spd_} or \ref{eqn:eod_}, the influence of the $j$-th unit in the $l$-th layer is given by
\begin{align}
    S_{l,j}=\frac{1}{N}\sum_{i=1}^N\frac{\partial\tilde{\mu}\left(f_{\boldsymbol{\theta}}(\cdot),\mathcal{X},\mathcal{Y},\mathcal{A}\right)}{\partial z^l_j(\vx_i)},
    \label{eqn:saliency_grad}
\end{align}
where $z^l_j(\vx_i)$ denotes the unit's pre-activation at input $\vx_i$. In practice, partial derivatives such as those above can be computed efficiently using automatic differentiation, e.g. \mbox{PyTorch's} \mbox{Autograd} module \citep{Paszke2017}. The measure in \Eqref{eqn:saliency_grad} can identify the most influential units that need to be pruned. Empirical results in \Secref{sec:results} suggest that removing influential units effectively reduces the bias.

\paragraph{Pruning Procedure}
Algorithm~\ref{alg:fair_pruning} outlines the proposed pruning procedure comprising a few simple steps. \mbox{(\emph{i}) For} layer $1\leq l\leq L$, the influence $S_{l,j}$ (see \Eqref{eqn:saliency_grad}) is evaluated for each unit $j$ on the validation data $\mathcal{D}_{\mathrm{valid}}$. The memory complexity can be reduced by evaluating and averaging the influence across mini-batches rather than the entire validation set. For similar reasons, one may choose to prune only specific layers selectively rather than all $L$ intermediate layers. \mbox{(\emph{ii}) A} specified number, determined by the number of steps $B\geq1$, of the most influential units are pruned. For FC layers, a unit can be pruned by setting all outgoing weights to $0$'s, e.g. for the $j$-th unit in the $l$-th layer; this amounts to the assignment $\mW^{l}_{j,\cdot}\leftarrow\boldsymbol{0}$. For convolutional layers, we implement a dropout-like binary mask applied to pre-activations \citep{Srivastava2014}. Furthermore, note that the order in which units are pruned depends on the sign of the initial model's bias, i.e. whether the bias needs to be driven down or up towards 0. \mbox{(\emph{iii}) The} bias $\mu(\cdot)$ (see Equations~\ref{eqn:spd} and \ref{eqn:eod}) and performance $\rho(\cdot)$ of the pruned network are evaluated on the validation set. \mbox{(\emph{iv}) Influence} $S_{l,j}$ is recomputed for the pruned network, and steps \mbox{(\emph{ii})--(\emph{iv})} are repeated. In the end, an optimal sparsity level is chosen by returning a pruned network with the lowest bias and the performance at least $\varrho>0$, a hyperparameter determined by the user-specified constraint. 

\begin{algorithm}[t!]
    \SetAlgoLined
    \SetKwInOut{Output}{Output}
    
    \KwIn{Held-out validation set $\mathcal{D}_{\mathrm{valid}}=\left\{\left(\vx_i,y_i,a_i\right)\right\}_{i=1}^{N_{\mathrm{valid}}}$; neural network $f_{\boldsymbol{\theta}}(\cdot)$ with parameters $\boldsymbol{\theta}$ and $L$ intermediate layers; classification threshold $t\in[0,\,1]$; predictive performance measure $\rho(\cdot)$; bias measure $\mu(\cdot)$; differentiable bias proxy $\tilde{\mu}(\cdot)$; lower bound on performance $\varrho>0$; number of steps $B\geq1$}
    
    \Output{Pruned and debiased network $f_{\tilde{\boldsymbol{\theta}}}(\cdot)$ with parameters $\tilde{\boldsymbol{\theta}}$}
    
    $\mu_0\leftarrow\mu\left(\boldsymbol{1}_{\left\{f_{\boldsymbol{\theta}}(\cdot)\geq t\right\}},\left\{\vx_i\right\}_{i=1}^{N_{\mathrm{valid}}},\left\{y_i\right\}_{i=1}^{N_{\mathrm{valid}}}, \left\{a_i\right\}_{i=1}^{N_{\mathrm{valid}}}\right)$, where $(\vx_i,y_i,a_i)\in\mathcal{D}_{\mathrm{valid}}$
    
    Initialise $\tilde{\boldsymbol{\theta}}\leftarrow\boldsymbol{\theta}$
    
    Given $\mathcal{D}_{\mathrm{valid}}$ and $\tilde{\mu}(\cdot)$, evaluate $S_{l,j}$ (see Equation~\ref{eqn:saliency_grad}) for every unit $j$ in layer $1\leq l\leq L$ 
    
    \For{$b=0$ to $B-1$} {
        
        Let $\tau_{b}\leftarrow q_{1-1/B}\left(\left\{\mathrm{sgn}\left(\mu_0\right)S_{l,j}\right\}\right)$, where $q_{\alpha}(\cdot)$ denotes the empirical $\alpha$-quantile
        
        Prune unit $j$ in layer $1\leq l\leq L$ if $\mathrm{sgn}\left(\mu_0\right)S_{l,j}>\tau_{b}$ and adjust $\tilde{\boldsymbol{\theta}}$ accordingly
        
        $\tilde{t}\leftarrow\arg\max_{t'\in[0,\,1]}\rho\left(\boldsymbol{1}_{\left\{f_{\tilde{\boldsymbol{\theta}}}(\cdot),\geq t'\right\}},\left\{\vx_i\right\}_{i=1}^{N_{\mathrm{valid}}},\left\{y_i\right\}_{i=1}^{N_{\mathrm{valid}}}\right)$
        
        $\mu_b\leftarrow\mu\left(\boldsymbol{1}_{\left\{f_{\tilde{\boldsymbol{\theta}}}(\cdot)\geq \tilde{t}\right\}},\left\{\vx_i\right\}_{i=1}^{N_{\mathrm{valid}}},\left\{y_i\right\}_{i=1}^{N_{\mathrm{valid}}}, \left\{a_i\right\}_{i=1}^{N_{\mathrm{valid}}}\right)$
        
        $\rho_b\leftarrow\rho\left(\boldsymbol{1}_{\left\{f_{\tilde{\boldsymbol{\theta}}}(\cdot)\geq \tilde{t}\right\}},\left\{\vx_i\right\}_{i=1}^{N_{\mathrm{valid}}},\left\{y_i\right\}_{i=1}^{N_{\mathrm{valid}}}\right)$
        
        Reevaluate $S_{l,j}$ for the pruned network $f_{\tilde{\boldsymbol{\theta}}}(\cdot)$
        
        $\tilde{\boldsymbol{\theta}}_b\leftarrow\tilde{\boldsymbol{\theta}}$
    }
    
    $b^{*}\leftarrow\arg\min_{\substack{0\leq b\leq B-1\\ \rho_b\geq\varrho}}\left|\mu_b\right|$
    
    \Return $f_{\tilde{\boldsymbol{\theta}}_{b^*}}(\cdot)$
    \caption{Pruning procedure for debiasing neural networks. Individual units are removed greedily based on their influence, and a network with minimal bias subject to the specified performance constraint is returned.  \label{alg:fair_pruning}}
\end{algorithm}

\vspace{0.25cm}

The procedure above greedily removes individual units in the intermediate layers of the neural network step-by-step based on the criterion given by \Eqref{eqn:saliency_grad}. It returns a pruned network with minimal bias subject to the performance constraint. We will see that, in practice, it often allows making few changes to the classifier without retraining from scratch and sacrificing the predictive performance while reducing the classification disparity.

\subsection{Bias Gradient Descent/Ascent\label{sec:biasgda}}

Since the proxies given by Equations~\ref{eqn:spd_} and \ref{eqn:eod_} are differentiable w.r.t. $\boldsymbol{\theta}$, one could reduce the bias directly using gradient descent or ascent, depending on the sign of the bias. Therefore, another approach we propose is fine-tuning the classifier $f_{\boldsymbol{\theta}}(\cdot)$ for a few epochs with a small learning rate, for instance, using mini-batch gradient descent and \Eqref{eqn:spd_} or \ref{eqn:eod_} as a loss function. Algorithm~\ref{alg:biasGD} contains the pseudocode for the bias gradient descent/ascent (GD/A) procedure. This method is at first glance similar to the adversarial debiasing by \citet{Zhang2018}, who apply a discriminator to the network's output. However, we perform gradient descent/ascent on the differentiable bias proxies \emph{after} the network has been trained and do not require knowledge of the protected attribute during training.

Similar to Algorithm~\ref{alg:fair_pruning}, the weight update direction in bias GD/A depends on the sign of the initial bias. Likewise, at the end of the algorithm, a fine-tuned debiased network is returned, minimising the bias with the performance of at least $\varrho$. This procedure has several additional hyperparameters, namely, learning rate $\eta>0$, which, in practice, should be chosen sufficiently small, mini-batch size $M\geq1$, and a maximum number of fine-tuning epochs $E\geq1$. In our experiments, we observed that, compared to the training of the original model, relatively few fine-tuning epochs suffice to reduce the bias. Although Algorithm~\ref{alg:biasGD} is based on the mini-batch gradient descent, other optimisation procedures can be adopted, e.g. batch gradient descent, as long as the procedure supports the evaluation of the differentiable proxy $\tilde{\mu}(\cdot)$ on several data points.

\begin{algorithm}[t!]
    \SetAlgoLined
    \SetKwInOut{Output}{Output}
    
    \KwIn{Held-out validation set $\mathcal{D}_{\mathrm{valid}}=\left\{\left(\vx_i,y_i,a_i\right)\right\}_{i=1}^{N_{\mathrm{valid}}}$; neural network $f_{\boldsymbol{\theta}}(\cdot)$ with parameters $\boldsymbol{\theta}$; classification threshold $t\in[0,\,1]$; predictive performance measure $\rho(\cdot)$; bias measure $\mu(\cdot)$; differentiable bias proxy $\tilde{\mu}(\cdot)$; lower bound on performance $\varrho>0$; learning rate $\eta>0$; number of epochs $E\geq1$; mini-batch size $M\geq1$}
    
    \Output{\mbox{Fine-tuned and debiased network $f_{\tilde{\boldsymbol{\theta}}}(\cdot)$ with parameters $\tilde{\boldsymbol{\theta}}$}}
    
    $\mu_0\leftarrow\mu\left(\boldsymbol{1}_{\left\{f_{\boldsymbol{\theta}}(\cdot)\geq t\right\}},\left\{\vx_i\right\}_{i=1}^{N_{\mathrm{valid}}},\left\{y_i\right\}_{i=1}^{N_{\mathrm{valid}}}, \left\{a_i\right\}_{i=1}^{N_{\mathrm{valid}}}\right)$, where $(\vx_i,y_i,a_i)\in\mathcal{D}_{\mathrm{valid}}$
    
    Initialise $\tilde{\boldsymbol{\theta}}\leftarrow\boldsymbol{\theta}$
    
    \For{$e=0$ to $E-1$} {
        
        Draw mini-batch $\mathcal{B}=\left\{\left(\vx_i,y_i,a_i\right)\right\}_{i=1}^M$ without replacement, s.t. $\mathcal{B}\subseteq \mathcal{D}_{\mathrm{valid}}$
        
        $\tilde{\mu}_e\leftarrow\tilde{\mu}\left(f_{\tilde{\boldsymbol{\theta}}}(\cdot), \left\{\vx_i\right\}_{i=1}^M, \left\{y_i\right\}_{i=1}^M, \left\{a_i\right\}_{i=1}^M\right)$, where $(\vx_i,y_i,a_i)\in\mathcal{B}$
        
        $\tilde{\boldsymbol{\theta}}\leftarrow\tilde{\boldsymbol{\theta}}-\mathrm{sgn}\left(\mu_0\right)\eta\nabla_{\tilde{\boldsymbol{\theta}}}\tilde{\mu}_e$
        
        $\tilde{t}\leftarrow\arg\max_{t'\in[0,\,1]}\rho\left(\boldsymbol{1}_{\left\{f_{\tilde{\boldsymbol{\theta}}}(\cdot),\geq t'\right\}},\left\{\vx_i\right\}_{i=1}^{N_{\mathrm{valid}}},\left\{y_i\right\}_{i=1}^{N_{\mathrm{valid}}}\right)$
        
        $\mu_e\leftarrow\mu\left(\boldsymbol{1}_{\left\{f_{\tilde{\boldsymbol{\theta}}}(\cdot)\geq \tilde{t}\right\}},\left\{\vx_i\right\}_{i=1}^{N_{\mathrm{valid}}},\left\{y_i\right\}_{i=1}^{N_{\mathrm{valid}}}, \left\{a_i\right\}_{i=1}^{N_{\mathrm{valid}}}\right)$
        
        $\rho_e\leftarrow\rho\left(\boldsymbol{1}_{\left\{f_{\tilde{\boldsymbol{\theta}}}(\cdot)\geq \tilde{t}\right\}},\left\{\vx_i\right\}_{i=1}^{N_{\mathrm{valid}}},\left\{y_i\right\}_{i=1}^{N_{\mathrm{valid}}}\right)$
        
        $\tilde{\boldsymbol{\theta}}_e\leftarrow\tilde{\boldsymbol{\theta}}$
    }
    
    $e^{*}\leftarrow\arg\min_{\substack{0\leq e\leq E-1\\ \rho_e\geq\varrho}}\left|\mu_e\right|$
    
    \Return $f_{\tilde{\boldsymbol{\theta}}_{e^*}}(\cdot)$
    \caption{Bias gradient descent/ascent procedure for debiasing neural networks. A biased classifier is fine-tuned by performing bias gradient descent or ascent on a differentiable bias proxy function. In the end, a network with minimal bias subject to the specified performance constraint is returned.\label{alg:biasGD}}
\end{algorithm}

\section{Experimental Setup\label{sec:expsetup}}

The purpose of our experiments was twofold: \mbox{(\emph{i}) test} the proposed pruning and bias GD/A methods on tabular and image data for FC and CNN architectures and \mbox{(\emph{ii}) explore} the use of intra- and post-processing to mitigate biases in deep chest X-ray classifiers. Below, we briefly summarise the datasets, pre-processing, compared techniques, and the evaluation procedure. Further implementation details can be found in Appendix~\ref{app:imp}.

\subsection{Datasets}

We compared debiasing techniques on tabular and image data (see Table~\ref{tab:datasets} in Appendix~\ref{app:data}). Tabular datasets include several publicly available benchmarks, most of them part of IBM AIF 360 toolkit \citep{Bellamy2018}. We refer the reader to \citet{Quy2021} for a thorough exploratory analysis of these datatsets. Furthermore, we applied debiasing to CNNs trained on the large-scale chest X-ray dataset -- MIMIC-CXR \citep{mimic-cxr}. In addition, we performed experiments on synthetic data (see Appendix~\ref{app:synth}).

\paragraph{Adult} The Adult Census Income data contains 48,842 instances and includes seven categorical, two binary, and six numerical features. The task is to predict whether a person's annual income exceeds 50,000\$ \citep{Kohavi1996, Quy2021}. In our experiments, we focused on the protected attribute ``\emph{sex}''. Note that here and below, we use the term ``\emph{sex}'' to match the reported terminology in the underlying data.

\paragraph{Bank} The dataset was collected during phone call marketing campaigns \citep{Moro2014, Quy2021} and comprises 45,211 samples with six categorical, four binary, and seven numerical features. The classification task is to predict a deposit subscription by a potential client. We used ``\emph{age}'' as the protected variable.

\paragraph{COMPAS} The Correctional Offender Management Profiling for Alternative Sanctions dataset \citep{ProPublica,Quy2021} includes 7,214 samples with 31 categorical, 6 binary, and 14 numerical covariates. The underlying classification problem is predicting the risk of recidivism. The protected attribute is ``\emph{race}''.

\paragraph{MIMIC-III} 

Medical Information Mart for Intensive Care \mbox{(MIMIC-III-v1.4)} database consists of information on the admissions of patients who stayed in critical care units at a large tertiary care hospital \citep{Johnson2016}. It includes demographics, vital sign measurements, laboratory results, medications, notes, imaging reports, mortality rates, etc. We used pre-processing routine provided by \citet{benchmarkingPurushotham2018} that retains only the first admissions of adult patients ($>$ 15 years). Pre-processed data consist of 17 features from the SAPS-II score. We averaged time-series data for each feature/admission. Our experiments to predict in-hospital mortality focused on the ``\emph{age}'', ``\emph{marital status}'', and ``\emph{insurance type}'' as  protected attributes. For ``\emph{age}'', we grouped subjects $\geq 78$ years old into one category and the rest into another. For ``\emph{marital status}'', the two groups comprised \textit{single} and the rest. ``\emph{Insurance type}'' was dichotomised by grouping \textit{Medicare} and \textit{Medicaid} into one category (\emph{public} health insurance) and the rest into another (\emph{private}), similarly to \citet{mimicifMeng2021}. Not all protected attribute groupings are clinically meaningful, and debiasing might not be relevant in all cases. For instance, insurance type dichotomisation may be too simplistic since Medicare and Medicaid are distinctively different programs \citep{Altman2015} and should be treated separately in practice. However, we focus on binary-valued protected attributes and contextualise our analysis in the previous work by \citet{mimicifMeng2021}. In a similar vein, debiasing w.r.t. the SPD with ``\emph{age}'' as the protected attribute may not be clinically relevant; however, we included these results for completeness. 

\paragraph{MIMIC-CXR}

MIMIC-CXR is a large dataset of chest X-rays from 227,835 studies of 65,379 patients \citep{mimic-cxr}. Each study contains one or more images, usually frontal and lateral views. We only used frontal view images, resizing them to 224$\times$224 px. We focused on ``\emph{sex}'' and ``\emph{ethnicity}'' as protected variables since the groups of these attributes were previously shown to have disparate classification outcomes  \citep{SeyyedKalantari2020,SeyyedKalantari2021}. 
For each image, one or more labels are reported, comprising 14 binary attributes. For the protected attribute ``\textit{sex}'', \textit{male} patients formed the privileged and \textit{female} patients the unprivileged group. Since the classifiers trained using the following combinations of disease labels, protected attributes, and privileged/unprivileged groups were shown to have disparate TPRs \citep{SeyyedKalantari2020}, we took ``\textit{enlarged cardiomediastinum}'' (enlarged CM) as the classification label for the attribute ``\textit{sex}''. For ``\textit{ethnicity}'', \textit{white} patients were taken as the privileged, whereas patients with \textit{Hispanic/Latino} ethnicity as unprivileged group. For this attribute, we chose ``\textit{pneumonia}'' as the classification label.
Studies with \emph{no findings} were used as the negative class in both cases.

\subsection{Debiasing Methods}

In addition to the proposed pruning and bias GD/A procedures, we applied several other debiasing methods, focusing on intra- and post-processing approaches. \textsc{Standard} refers to the original, potentially biased classifier $f_{\boldsymbol{\theta}}(\cdot)$ with the classification threshold $t\in[0,\,1]$ chosen to maximise the balanced accuracy on the held-out validation data. We used the random perturbation procedure (\textsc{Random}) described by \citet{Savani2020} as a baseline. This method perturbs the parameters of the original network $f_{\boldsymbol{\theta}}\left(\cdot\right)$ several times by multiplicative Gaussian noise, distributed as $\mathcal{N}\left(1,\,0.01\right)$. The procedure returns a perturbed network maximising the bias-constrained objective proposed by \citet{Savani2020} on the validation set. \textsc{ROC} refers to the reject option classification post-processing algorithm \citep{Kamiran2012} that swaps classification outcomes for the subjects from the underprivileged group who fall within the confidence band around the decision boundary. \textsc{Eq. Odds} is the equalised odds post-processing method \citep{Hardt2016}. This algorithm adjusts output labels probabilistically to balance the odds across the protected attribute categories. Lastly, we considered adversarial fine-tuning (\textsc{Adv. Intra}) \citep{Savani2020}, an intra-processing technique closely related to ours that fine-tunes the biased classifier via adversarial training. In Appendix~\ref{app:zhang}, we also compare with the adversarial in-processing algorithm by \citet{Zhang2018}.

\subsection{Classification Models and Debiasing Evaluation}

For tabular datasets, we used the same FC architecture and training scheme for the classifier $f_{\boldsymbol{\theta}}(\cdot)$ (see Appendix~\ref{app:imp}), following the experimental setup of \citet{Savani2020}. For MIMIC-CXR, we used the VGG-16 \citep{Simonyan2014} and ResNet-18 \citep{He2016} CNN architectures, initialising them with pre-trained weights. All models were trained by minimising the binary cross-entropy loss using the Adam optimiser \citep{Kingma2015}. For chest X-ray classifiers, to avoid overfitting, we applied random augmentations during training, such as centre crop, horizontal flip, translation, and rotation.

For all compared techniques, debiasing was performed only on the validation set (see Appendix~\ref{app:imp}). The classifiers were trained and debiased repeatedly on the independent replicates of the train-validation-test split in the manner of Monte Carlo cross-validation. Classifiers were evaluated on the test data w.r.t. the bias and performance. We used balanced accuracy to reflect true positive and negative rates equally. For tabular data, the bias was evaluated in terms of the SPD and EOD. For MIMIC-CXR, we focused on the EOD rather than SPD since achieving even positive prediction outcomes across the groups of the protected attributes may not be clinically relevant.

\section{Results\label{sec:results}}

In this section, we provide the results of the empirical comparison among several debiasing techniques, including the proposed pruning and bias GD/A intra-processing algorithms. Further results and additional experiments on synthetic data, investigating the stability of pruning and bias GD/A, are discussed in Appendix~\ref{app:results}.

\subsection{Results on Tabular Benchmarks\label{sec:results_tabular}}

Tables~\ref{tab:tabular_results} and \ref{tab:mimiciii_results} contain quantitative results obtained on tabular data: EOD, SPD, and BA before and after debiasing. Compared to other post- and intra-processing techniques, pruning and bias GD/A successfully mitigate biases and tend to sacrifice less accuracy on most datasets. On average, pruning performs slightly worse than GD/A and has larger variability across seeds.
The results for the adversarial fine-tuning are in line with those reported in the original paper by \citet{Savani2020}. Generally, while this method visibly reduces the bias, it tends to sacrifice the BA more, likely due to minimising a loss function different from that of the bias GD/A. Interestingly, on Adult dataset, both of our procedures drastically reduce the BA of the classifier and perform worse than ROC: we attribute this to the general sensitivity of intra-processing methods \citep{Savani2020} to initial conditions. In Appendix~\ref{app:results}, we explore this phenomenon further on synthetic data. In brief, we observed that when the bias of the original classifier is high, proposed techniques reduce the accuracy considerably or fail to reduce the bias, therefore, it may be prudent to retrain the model from scratch using an in-processing approach or resort to post-processing in such cases.

In addition, we examined changes in the bias and balanced accuracy of the neural network throughout the process of pruning and bias GD/A. \Figref{fig:mimiciii_trajectories} shows the trajectories of the EOD, SPD, and BA obtained on \mbox{MIMIC-III} data for predicting in-hospital mortality with the ``\emph{insurance type}'' as the protected attribute. Encouragingly, both methods drive the classification disparity towards zero while not affecting the balanced accuracy of the classifier significantly. We observed that few pruning steps or fine-tuning epochs, compared to the training time of the original model (a maximum of 1,000 epochs), were necessary to reduce the bias, suggesting the viability of fine-tuning an already-trained biased model on the validation set. Generally, the bias and BA trajectories for pruning featured slightly higher variance across seeds. An intuitive explanation could be that pruning, compared to GD/A, explores a relatively limited number of debiased network weight configurations, particularly for smaller architectures, such as the one in our tabular experiments (see Table~\ref{tab:arch}). We observed similar debiasing dynamics on other tabular benchmarks (see \Figref{fig:trajectories_full}).

\begin{table}[h]
    \centering
    \caption{\mbox{Bias (\emph{a})} and balanced \mbox{accuracy (\emph{b})} attained before and after debiasing neural networks trained on nonclinical tabular data. If necessary, debiasing was run twice for each dataset: for the SPD and EOD separately. The results are reported as averages followed by standard deviations across 20 train-validation-test splits. Best results are shown in \textbf{bold}, second-best -- in \textit{italic}, except for \textsc{Standard}. \label{tab:tabular_results}}
    
    \subtable[Bias]{
        \scriptsize
        \begin{tabular}{p{1.8cm}lp{2.1cm}p{2.1cm}p{2.1cm}}
            \toprule
            \makecell[l]{\textbf{Bias}\\ \textbf{Measure}} & \textbf{Method} & \textbf{\makecell[l]{Adult: \\ \textit{Sex}}} & \textbf{\makecell[l]{Bank: \\ \textit{Age}}} & \textbf{\makecell[l]{COMPAS: \\ \textit{Race}}} \\
            \toprule
            \multirow{7}{*}{\makecell[l]{\textbf{SPD}}} & \cellcolor{Gainsboro}\textsc{Standard} & \cellcolor{Gainsboro}\tentry{-0.32}{0.02} & \cellcolor{Gainsboro}\tentry{0.18}{0.04} & \cellcolor{Gainsboro}\tentry{{\;}0.19}{0.03} \\
            & \textsc{Random} & \tentry{-0.04}{0.01} & \textit{\tentry{0.03}{0.04}} & \tentry{{\;}0.09}{0.04} \\
            & \textsc{ROC} & \tentry{-0.04}{0.02} & \tentry{0.08}{0.04} & \textbf{\tentry{-0.01}{0.01}} \\
            & \textsc{Eq. Odds} & \tentry{-0.09}{0.01} & \tentry{0.06}{0.03} & \tentry{{\;}0.03}{0.06} \\
            & \textsc{Adv. Intra} &  \textit{\tentry{-0.03}{0.00}} &  \tentry{0.05}{0.03} &  \tentry{\;0.03}{0.03} \\
            & \underline{\textsc{Pruning}} & \tentry{-0.04}{0.05} & \textbf{\tentry{0.02}{0.04}} & \textit{\tentry{\;0.02}{0.03}} \\
            & \underline{\textsc{Bias GD/A}} & \textbf{\tentry{-0.01}{0.04}} & \tentry{0.04}{0.05} & \tentry{{\;}0.04}{0.04} \\
            \midrule
            \multirow{7}{*}{\makecell[l]{\textbf{EOD}}} & \cellcolor{Gainsboro}\textsc{Standard} & \cellcolor{Gainsboro}\tentry{-0.14}{0.02} & \cellcolor{Gainsboro}\tentry{0.01}{0.04} & \cellcolor{Gainsboro}\tentry{{\;}0.20}{0.05} \\
            & \textsc{Random} & \tentry{-0.07}{0.03} & \textit{\tentry{0.02}{0.04}} & \tentry{\;0.09}{0.04} \\
            & \textsc{ROC} & \tentry{-0.05}{0.03} & \tentry{0.04}{0.04} & \textbf{\tentry{-0.01}{0.01}} \\
            & \textsc{Eq. Odds} & \textbf{\tentry{-0.01}{0.04}} & \tentry{0.04}{0.10} & \textit{\tentry{{\;}0.03}{0.06}} \\
            & \textsc{Adv. Intra} &  \tentry{-0.09}{0.03} &  \tentry{0.03}{0.06} &  \tentry{\;0.14}{0.07} \\
            & \underline{\textsc{Pruning}} & \textbf{\tentry{-0.01}{0.03}} & \textbf{\tentry{0.00}{0.07}} & \tentry{\;0.04}{0.06} \\
            & \underline{\textsc{Bias GD/A}} & \textit{\tentry{-0.03}{0.03}} & \textit{\tentry{0.02}{0.06}} & \tentry{\;0.06}{0.06} \\
            \bottomrule
        \end{tabular}
    }
    
    \vspace{0.25cm}
    
    \subtable[Balanced accuracy]{
        \scriptsize
        \begin{tabular}{p{1.8cm}lp{2.1cm}p{2.1cm}p{2.1cm}}
            \toprule
            \makecell[l]{\textbf{Bias}\\ \textbf{Measure}} & \textbf{Method} & \textbf{\makecell[l]{Adult: \\ \textit{Sex}}} & \textbf{\makecell[l]{Bank: \\ \textit{Age}}} & \textbf{\makecell[l]{COMPAS: \\ \textit{Race}}} \\
            \toprule
            \multirow{7}{*}{\makecell[l]{\textbf{SPD}}} & \cellcolor{Gainsboro}\textsc{Standard} & \cellcolor{Gainsboro}\tentry{0.82}{0.01} & \cellcolor{Gainsboro}\tentry{0.86}{0.01} & \cellcolor{Gainsboro}\tentry{0.65}{0.01} \\
            & \textsc{Random} & \tentry{0.60}{0.01} & \tentry{0.60}{0.10} & \tentry{0.60}{0.03} \\
            & \textsc{ROC} & \textbf{\tentry{0.79}{0.01}} & \tentry{0.66}{0.10} & \tentry{0.50}{0.00} \\
            & \textsc{Eq. Odds} & \textit{\tentry{0.73}{0.02}} & \tentry{0.70}{0.02} & \tentry{0.60}{0.01} \\
            & \textsc{Adv. Intra} & \tentry{0.56}{0.01} & \tentry{0.61}{0.09} & \tentry{0.56}{0.04} \\
            & \underline{\textsc{Pruning}} & \tentry{0.56}{0.04} & \textit{\tentry{0.84}{0.01}} & \textit{\tentry{0.63}{0.02}} \\
            & \underline{\textsc{Bias GD/A}} & \tentry{0.66}{0.01} & \textbf{\tentry{0.86}{0.01}} & \textbf{\tentry{0.64}{0.01}} \\
            \midrule
            \multirow{7}{*}{\makecell[l]{\textbf{EOD}}} & \cellcolor{Gainsboro}\textsc{Standard} & \cellcolor{Gainsboro}\tentry{0.82}{0.01} & \cellcolor{Gainsboro}\tentry{0.86}{0.01} & \cellcolor{Gainsboro}\tentry{0.65}{0.01} \\
            & \textsc{Random} & \tentry{0.78}{0.03} & \textbf{\tentry{0.86}{0.01}} & \tentry{0.61}{0.03} \\
            & \textsc{ROC} & \textbf{\tentry{0.82}{0.01}} & \textbf{\tentry{0.86}{0.01}} & \tentry{0.50}{0.00} \\
            & \textsc{Eq. Odds} & \tentry{0.73}{0.02} & \tentry{0.70}{0.02} & \tentry{0.60}{0.01} \\
            & \textsc{Adv. Intra} & \textit{\tentry{0.78}{0.02}} & \textit{\tentry{0.84}{0.01}} & \tentry{0.61}{0.02} \\
            & \underline{\textsc{Pruning}} & \textit{\tentry{0.78}{0.02}} & \textbf{\tentry{0.86}{0.03}} & \textit{\tentry{0.62}{0.03}} \\
            & \underline{\textsc{Bias GD/A}} & \textbf{\tentry{0.82}{0.01}} & \textbf{\tentry{0.86}{0.01}} & \textbf{\tentry{0.64}{0.01}} \\
            \bottomrule
        \end{tabular}
    }%
\end{table}

\begin{table}[h]
    \centering
    \caption{Bias and balanced accuracy before and after debiasing neural networks trained on \mbox{MIMIC-III} with \emph{age}, \emph{marital status}, and \emph{insurance type} as protected attributes.\label{tab:mimiciii_results}}
    \scriptsize
    \begin{tabular}{p{1.1cm}p{1.6cm}p{1.6cm}p{1.4cm}p{1.6cm}p{1.4cm}p{1.6cm}p{1.4cm}}
        & & \multicolumn{2}{c}{\textbf{\textit{Age}}} & \multicolumn{2}{c}{\textbf{\textit{Marital Status}}} & \multicolumn{2}{c}{\textbf{\textit{Insurance Type}}}\\
        \toprule
        \makecell[l]{\textbf{Bias}\\ \textbf{Measure}} &\textbf{Method} & \textbf{Bias} & \textbf{BA}
        & \textbf{Bias} & \textbf{BA} & \textbf{Bias} & \textbf{BA} \\
        \toprule
        \multirow{7}{*}{\makecell[l]{\textbf{SPD}}} &\cellcolor{Gainsboro}\textsc{Standard} & \cellcolor{Gainsboro}\tentry{-0.28}{0.03} & \cellcolor{Gainsboro}\tentry{0.76}{0.01} & \cellcolor{Gainsboro}\tentry{\;0.10}{0.02} & \cellcolor{Gainsboro}\tentry{0.76}{0.01} & \cellcolor{Gainsboro}\tentry{-0.19}{0.03} & \cellcolor{Gainsboro}\tentry{0.75}{0.01} \\
        & \textsc{Random} & \tentry{-0.04}{0.01} & \tentry{0.64}{0.01} & \tentry{\;0.05}{0.01} & \tentry{0.72}{0.02} & \tentry{-0.04}{0.01} & \tentry{0.67}{0.01} \\
        & \textsc{ROC} & \tentry{-0.05}{0.01} & \tentry{0.63}{0.01} & \tentry{\;0.03}{0.03} & \textbf{\tentry{0.75}{0.01}} & \tentry{-0.05}{0.01} & \tentry{0.68}{0.01} \\
        & \textsc{Eq. Odds} & \textit{\tentry{-0.01}{0.01}} & \tentry{0.57}{0.02} & \textit{\tentry{\;0.01}{0.00}} & \tentry{0.57}{0.01} & \textit{\tentry{-0.01}{0.00}} & \tentry{0.57}{0.01} \\
        & \textsc{Adv. Intra} & \tentry{-0.04}{0.01} & \tentry{0.60}{0.02} & \tentry{\;0.04}{0.02} & \tentry{0.67}{0.04} & \tentry{-0.03}{0.01} & \tentry{0.64}{0.02} \\
        & \underline{\textsc{Pruning}} & \textbf{\tentry{\;0.00}{0.02}} & \textit{\tentry{0.69}{0.02}} & \textbf{\tentry{\;0.00}{0.02}} & \textit{\tentry{0.73}{0.01}} & \textbf{\tentry{\;0.00}{0.02}} & \textit{\tentry{0.69}{0.03}} \\
        & \underline{\textsc{Bias GD/A}} & \textit{\tentry{-0.01}{0.02}} & \textbf{\tentry{0.72}{0.01}} & \textit{\tentry{\;0.01}{0.02}} & \textbf{\tentry{0.75}{0.01}} & \textit{\tentry{-0.01}{0.02}} & \textbf{\tentry{0.73}{0.01}} \\
        \midrule
        \multirow{7}{*}{\makecell[l]{\textbf{EOD}}} &\cellcolor{Gainsboro}\textsc{Standard} & \cellcolor{Gainsboro}\tentry{-0.11}{0.04} & \cellcolor{Gainsboro}\tentry{0.76}{0.01} & \cellcolor{Gainsboro}\tentry{\;0.08}{0.03} & \cellcolor{Gainsboro}\tentry{0.76}{0.01} & \cellcolor{Gainsboro}\tentry{-0.05}{0.04} & \cellcolor{Gainsboro}\tentry{0.75}{0.01} \\
        & \textsc{Random} & \textit{\tentry{-0.05}{0.05}} & \tentry{0.72}{0.03} & \tentry{\;0.06}{0.04} & \textit{\tentry{0.74}{0.03}} & \tentry{-0.04}{0.04} & \textbf{\tentry{0.75}{0.01}} \\
        & \textsc{ROC} & \textit{\tentry{-0.05}{0.06}} & \tentry{0.69}{0.04} & \tentry{\;0.03}{0.05} & \textbf{\tentry{0.75}{0.01}} & \tentry{-0.04}{0.04} & \textbf{\tentry{0.75}{0.02}} \\
        & \textsc{Eq. Odds} & \textbf{\tentry{\;0.01}{0.04}} & \tentry{0.57}{0.02} & \textbf{\tentry{\;0.01}{0.04}} & \tentry{0.57}{0.01} & \textit{\tentry{\;0.01}{0.04}} & \tentry{0.57}{0.01} \\
        & \textsc{Adv. Intra} & \tentry{-0.08}{0.03} & \tentry{0.71}{0.02} & \tentry{\;0.06}{0.04} & \tentry{0.73}{0.01} & \tentry{-0.02}{0.03} & \tentry{0.72}{0.03} \\
        & \underline{\textsc{Pruning}} & \textbf{\tentry{\;0.01}{0.06}} & \textit{\tentry{0.73}{0.01}} & \textit{\tentry{-0.02}{0.06}} & \tentry{0.73}{0.02} & \textbf{\tentry{{\;}0.00}{0.04}} & \textit{\tentry{0.74}{0.01}} \\
        & \underline{\textsc{Bias GD/A}} & \textbf{\tentry{-0.01}{0.05}} & \textbf{\tentry{0.75}{0.01}} & \textit{\tentry{\;0.02}{0.05}} & \textbf{\tentry{0.75}{0.01}} & \textbf{\tentry{\;0.00}{0.04}} & \textbf{\tentry{0.75}{0.01}} \\
        \bottomrule
    \end{tabular}
\end{table}

\begin{figure}[t]
    \centering
    \subfigure[{\footnotesize\centering \textsc{Pruning}, SPD}]{
        \includegraphics[width=0.25\linewidth]{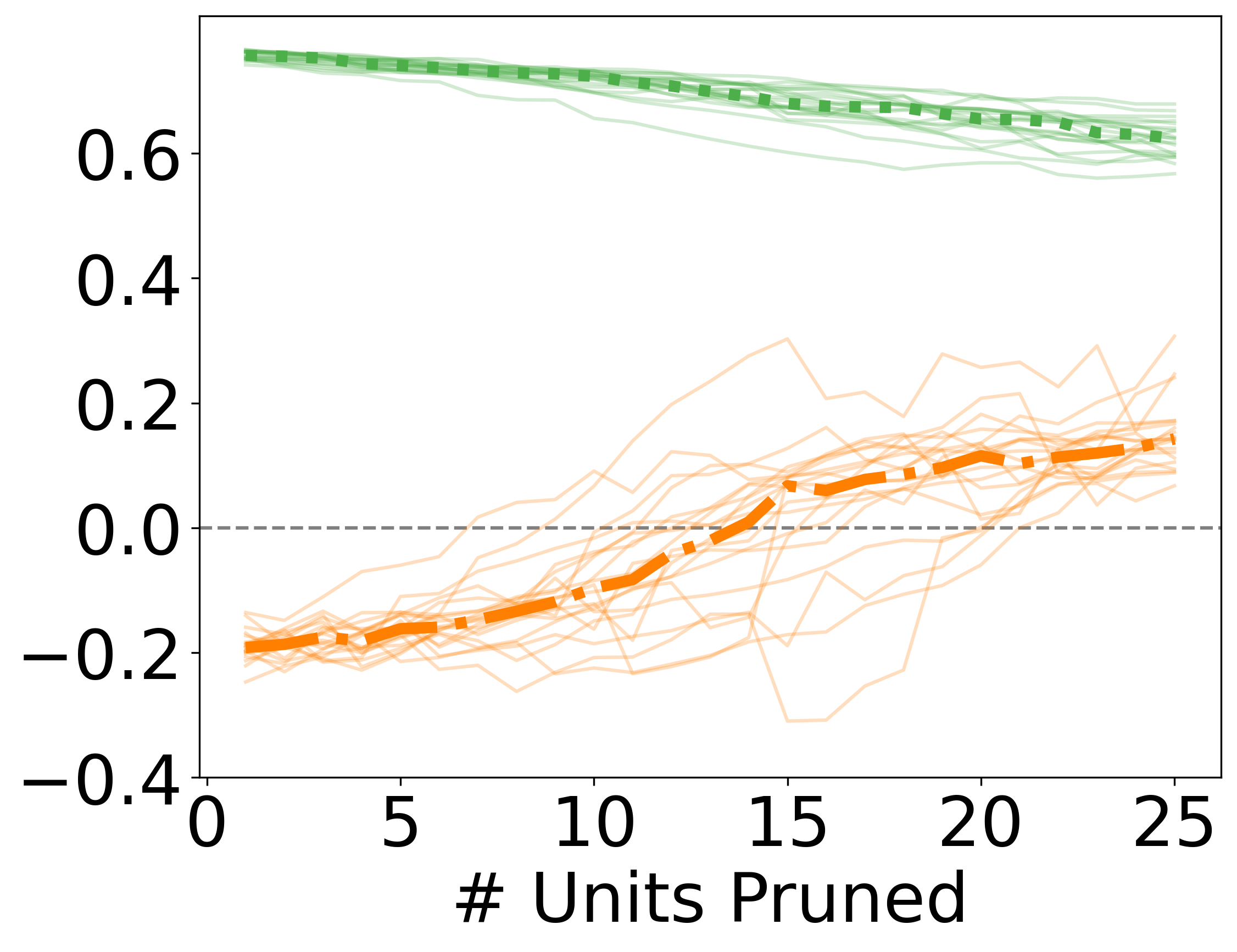}
    }
    \subfigure[{\footnotesize\centering \textsc{Pruning}, EOD}]{    
        \includegraphics[width=0.2125\linewidth]{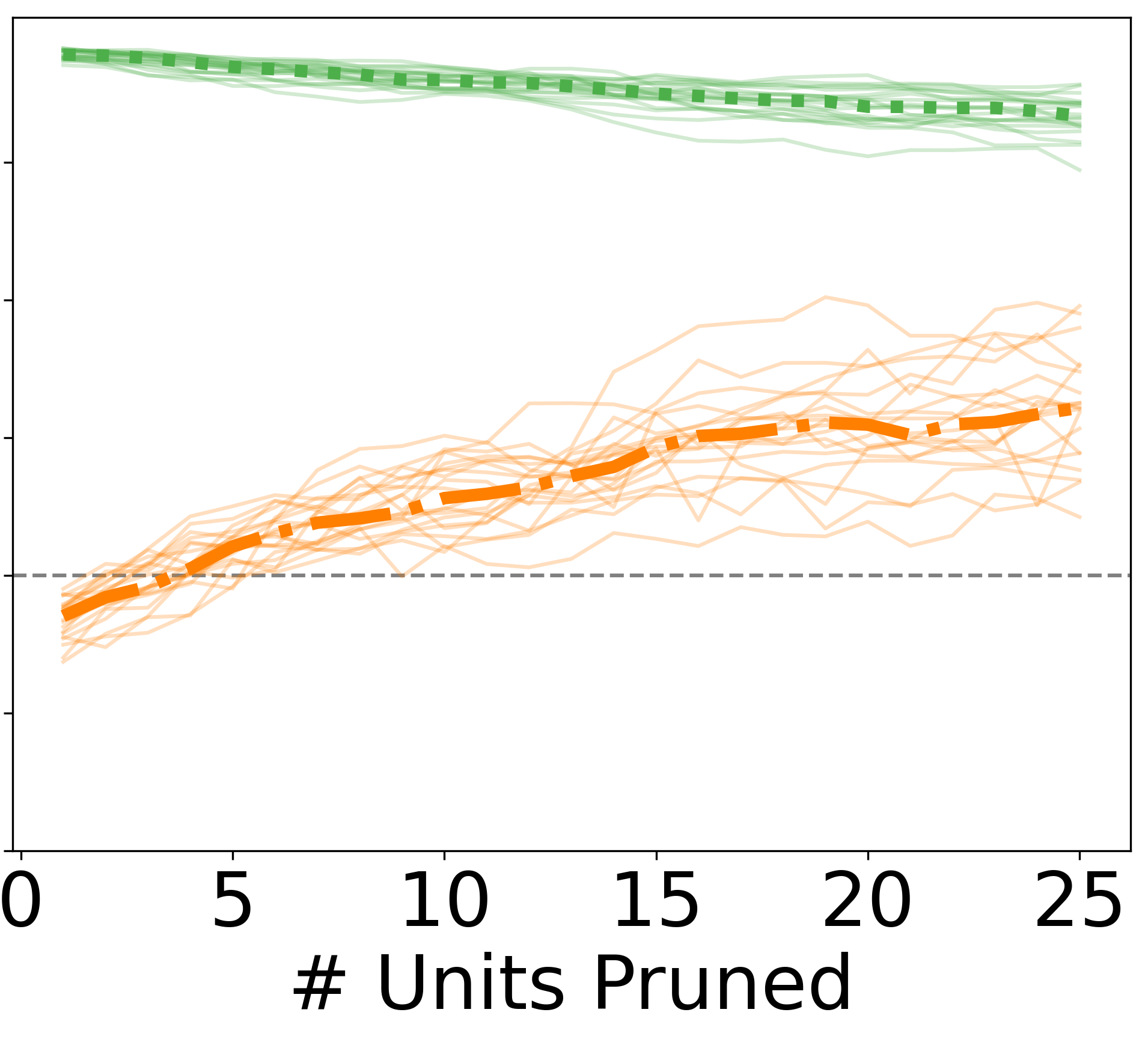}
    }
    \subfigure[{\footnotesize\centering \textsc{Bias GD/A}, SPD}]{
        \includegraphics[width=0.215\linewidth]{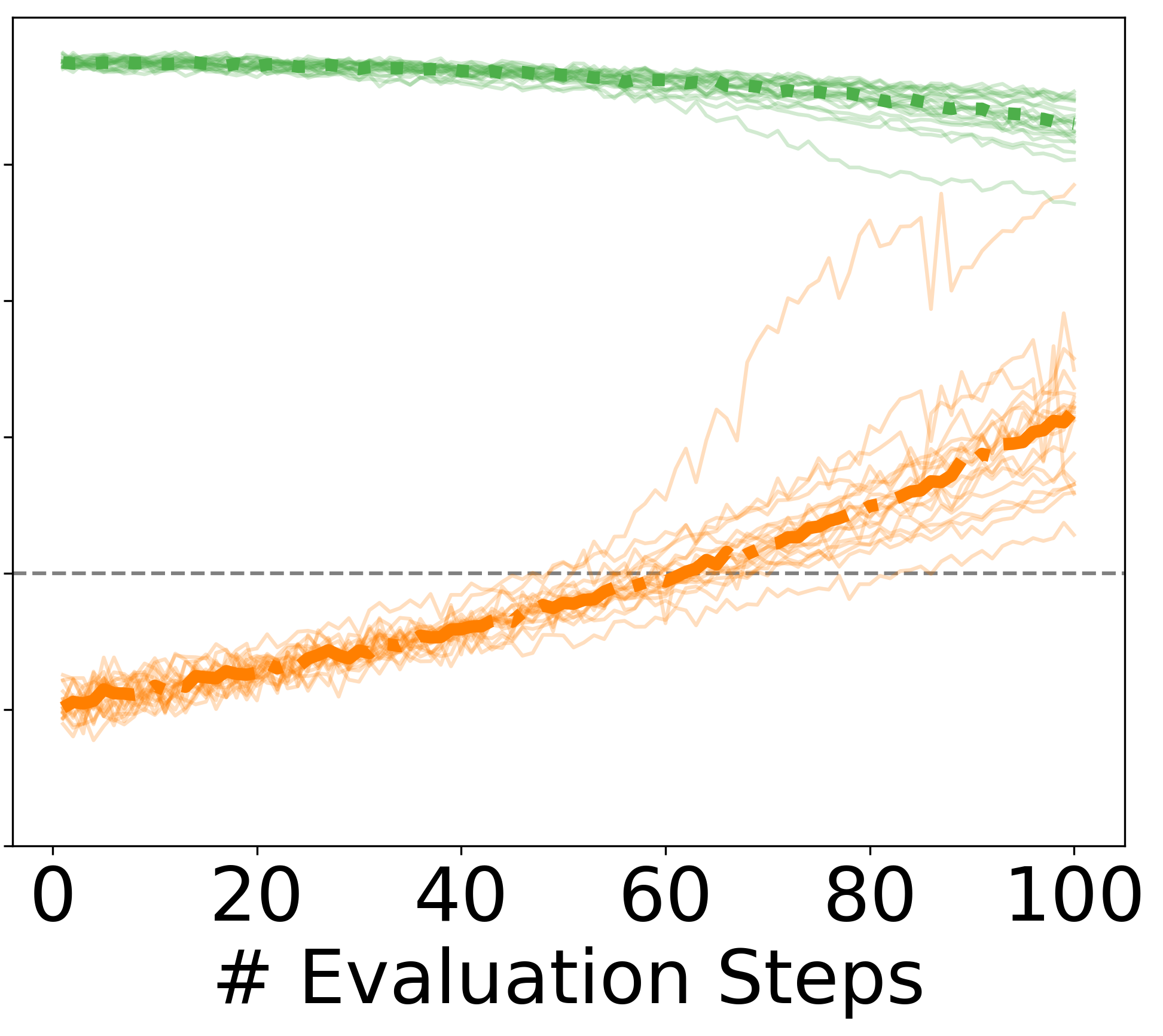}
    }
    \subfigure[{\footnotesize\centering \textsc{Bias GD/A}, EOD}]{        
        \includegraphics[width=0.215\linewidth]{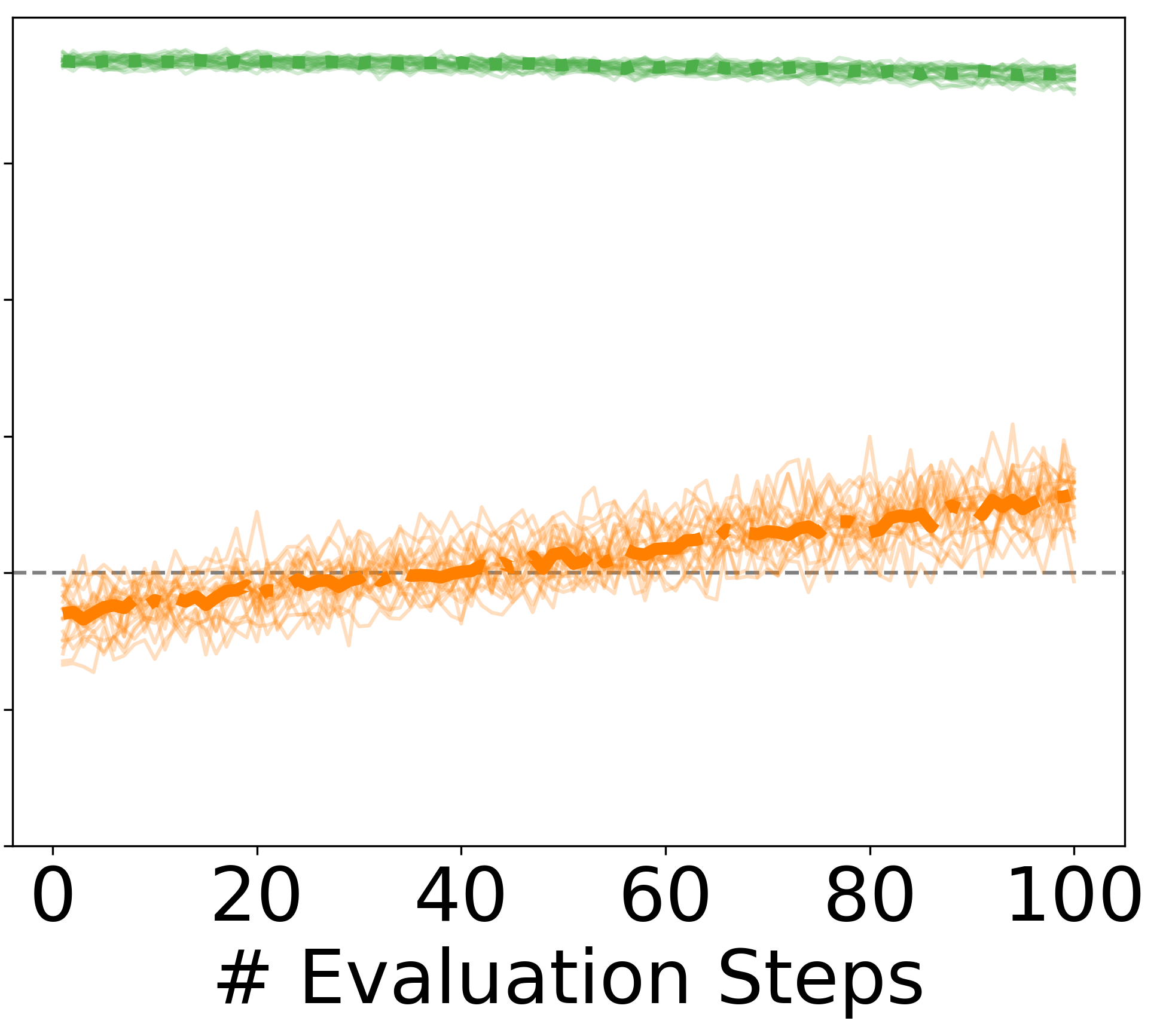}
    }
    \includegraphics[width=0.25\linewidth]{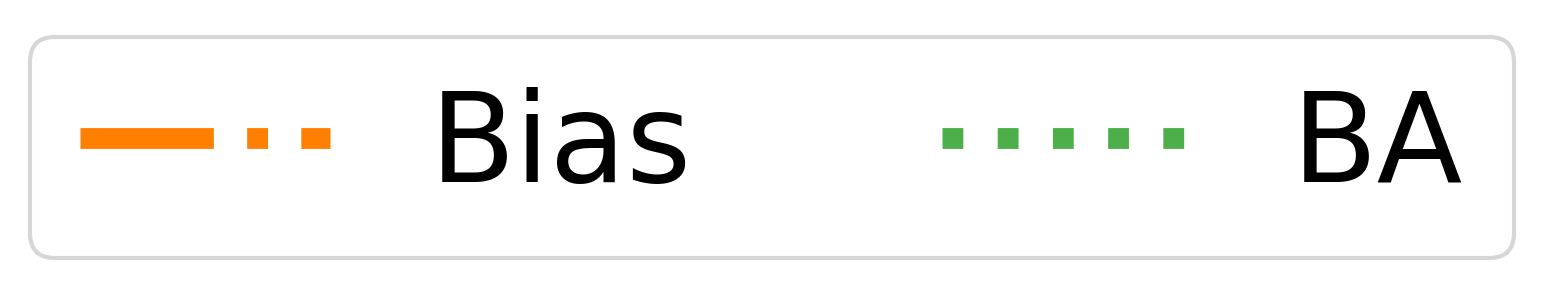}
    \caption{Changes in the {\color{LegendaryOrange}\textbf{bias}}, given by the SPD (\emph{a, c}) and EOD (\emph{b, d}), and {\color{LegendaryGreen}\textbf{balanced accuracy}} of the neural network during pruning (\emph{a, b}) and bias gradient descent/ascent (\emph{c, d}). The results were obtained on MIMIC-III for predicting in-hospital mortality with the ``\emph{insurance type}'' as the protected attribute from 20 train-validation-test splits. \textbf{Bold} lines correspond to the median across 20 seeds. Note that during the bias GD/A, the model was evaluated three times an epoch.}
    \label{fig:mimiciii_trajectories}
\end{figure}

\subsection{Chest X-ray Classification\label{sec:results_mimiccxr}}

Last but not least, we applied discussed intra- and post-processing techniques to CNNs trained on MIMIC-CXR. Table~\ref{tab:mimic_results} reports the EOD and BA after debiasing across 20 independent splits for the two pairs of protected attributes and labels and the two architectures. For predicting enlarged CM under the protected attribute ``\emph{sex}'', the EOD of the original classifier is mild for both VGG and ResNet, and most methods successfully reduce it without affecting the BA. Both pruning and bias GD/A achieve the best results on average alongside the equalised odds post-processing. Surprisingly, adversarial fine-tuning does not reduce the EOD equally well and leads to a lower BA. We attribute its poorer performance to overfitting on the validation data (see Table~\ref{tab:mimic_results_val} in Appendix~\ref{app:results}), likely, due to having many learnable parameters in the discriminator network. Nevertheless, it is encouraging that the classifiers trained on the data imbalanced w.r.t. the protected attribute can be debiased \emph{post hoc}, without retraining from scratch.

On the other hand, for predicting pneumonia under the protected attribute ``\emph{ethnicity}'', the average EOD of the original model is significantly higher for both architectures. The only method which satisfactorily reduces the bias, in this case, is equalised odds. While pruning and bias GD/A do not hurt the performance, the average EOD after debiasing is far from zero. However, both techniques still perform better than the na\"{i}ve random perturbation baseline. Similar to the results above, adversarial fine-tuning also fails to debias the network. We see several plausible explanations for the poorer performance of the proposed methods: \mbox{(\emph{i}) overfitting} to the small validation set --- for pneumonia and ``\emph{ethnicity}'', it only contains about 1,000 images; \mbox{(\emph{ii}) the} protected attribute ``\emph{ethnicity}'' is harder to detect from X-rays, compared to ``\emph{sex}'', and it may be prudent to use a post-processing technique which requires the attribute as input at \emph{test time}; \mbox{(\emph{iii}) the} general sensitivity of the intra-processing to the initial conditions (see Appendix~\ref{app:results}). Additionally, it must be noted that the attribute ``\emph{ethnicity}'' is self-reported \citep{SeyyedKalantari2020} and might be noisy and misaligned with the ground truth, introducing another source of variability into the results.

\begin{table}[h]
    \centering
    \caption{Equal opportunity difference and balanced accuracy attained before and after debiasing VGG-16 (\emph{a},\emph{b}) and ResNet-18 (\emph{c},\emph{d}) trained on \mbox{MIMIC-CXR} to predict \mbox{(\emph{a},\emph{c}) enlarged} cardiomediastinum (with the protected attribute ``\emph{sex}'') and \mbox{(\emph{b},\emph{d}) pneumonia} (with the protected attribute ``\emph{ethnicity}'').\label{tab:mimic_results}}
    
    \subtable[\centering Enlarged CM, \textit{Sex}; VGG-16]{
        \scriptsize
        \begin{tabular}{p{1.7cm}p{1.6cm}p{1.6cm}}
            \toprule
            \textbf{Method} & \textbf{EOD} & \textbf{BA} \\
            \toprule
            \cellcolor{Gainsboro}\textsc{Standard} & \cellcolor{Gainsboro}\tentry{-0.05}{0.02} & \cellcolor{Gainsboro}\tentry{0.77}{0.01} \\
            \textsc{Random} & \tentry{-0.03}{0.03} & \textit{\tentry{0.75}{0.01}} \\
            \textsc{ROC} & \tentry{-0.05}{0.02} & \textit{\tentry{0.75}{0.03}} \\
            \textsc{Eq. Odds} & \textit{\tentry{{\;}0.01}{0.03}} & \textit{\tentry{0.75}{0.01}} \\
            \textsc{Adv. Intra} & \tentry{-0.04}{0.03} & \tentry{0.73}{0.01} \\
            \underline{\textsc{Pruning}} & \textbf{\tentry{\;0.00}{0.02}} & \textbf{\tentry{0.76}{0.02}} \\
            \underline{\textsc{Bias GD/A}} & \textit{\tentry{-0.01}{0.04}} & \textbf{\tentry{0.76}{0.01}} \\
            \bottomrule
        \end{tabular}
    }
    \subtable[\centering Pneumonia, \textit{Ethnicity}; VGG-16]{
        \scriptsize
        \begin{tabular}{p{1.7cm}p{1.6cm}p{1.6cm}}
            \toprule
            \textbf{Method} & \textbf{EOD} & \textbf{BA} \\
            \toprule
            \cellcolor{Gainsboro}\textsc{Standard} & \cellcolor{Gainsboro}\tentry{-0.14}{0.04} & \cellcolor{Gainsboro}\tentry{0.73}{0.02}\\
            \textsc{Random} & \tentry{-0.11}{0.06} & \textbf{\tentry{0.71}{0.02}} \\
            \textsc{ROC} & \textit{\tentry{-0.07}{0.06}} & \tentry{0.65}{0.06} \\
            \textsc{Eq. Odds} & \textbf{\tentry{\;0.00}{0.06}} & \textit{\tentry{0.70}{0.01}}
            \\
            \textsc{Adv. Intra} & \tentry{-0.13}{0.05}  & \textit{\tentry{0.70}{0.02}} \\
            \underline{\textsc{Pruning}} & \tentry{-0.09}{0.05} & \textbf{\tentry{0.71}{0.03}} \\
            \underline{\textsc{Bias GD/A}} & \tentry{-0.08}{0.06} & \textbf{\tentry{0.71}{0.02}} \\
            \bottomrule
        \end{tabular}
    }
    
    \vspace{0.25cm}
    
    \subtable[\centering Enlarged CM, \textit{Sex}; ResNet-18]{
        \scriptsize
        \begin{tabular}{p{1.7cm}p{1.6cm}p{1.6cm}}
            \toprule
            \textbf{Method} & \textbf{EOD} & \textbf{BA} \\
            \toprule
            \cellcolor{Gainsboro}\textsc{Standard} & \cellcolor{Gainsboro}\tentry{-0.05}{0.04} & \cellcolor{Gainsboro}\tentry{0.76}{0.01} \\
            \textsc{Random} & \textbf{\tentry{\;0.00}{0.03}} & \tentry{0.73}{0.02} \\
            \textsc{ROC} & \tentry{-0.05}{0.03} & \textit{\tentry{0.74}{0.04}} \\
            \textsc{Eq. Odds} & \textit{\tentry{\;0.01}{0.03}} & \textit{\tentry{0.74}{0.01}} \\
            \textsc{Adv. Intra} & \tentry{-0.04}{0.04} & \tentry{0.73}{0.02} \\
            \underline{\textsc{Pruning}} & \textit{\tentry{-0.01}{0.03}} & \textit{\tentry{0.74}{0.02}} \\
            \underline{\textsc{Bias GD/A}} & \textbf{\tentry{\;0.00}{0.03}} & \textbf{\tentry{0.76}{0.01}} \\
            \bottomrule
        \end{tabular}
    }
    \subtable[\centering Pneumonia, \textit{Ethnicity}; ResNet-18]{
        \scriptsize
        \begin{tabular}{p{1.7cm}p{1.6cm}p{1.6cm}}
            \toprule
            \textbf{Method} & \textbf{EOD} & \textbf{BA} \\
            \toprule
            \cellcolor{Gainsboro}\textsc{Standard} & \cellcolor{Gainsboro}\tentry{-0.14}{0.05} & \cellcolor{Gainsboro}\tentry{0.73}{0.02}\\
            \textsc{Random} & \textit{\tentry{-0.06}{0.06}} & \tentry{0.65}{0.04} \\
            \textsc{ROC} & \tentry{-0.07}{0.04} & \tentry{0.65}{0.05} \\
            \textsc{Eq. Odds} & \textbf{\tentry{-0.01}{0.06}} & \tentry{0.70}{0.01}
            \\
            \textsc{Adv. Intra} & \tentry{-0.14}{0.03}  & \textit{\tentry{0.71}{0.02}} \\
            \underline{\textsc{Pruning}} & \tentry{-0.11}{0.05} & \tentry{0.70}{0.02} \\
            \underline{\textsc{Bias GD/A}} & \tentry{-0.11}{0.05} & \textbf{\tentry{0.73}{0.02}} \\
            \bottomrule
        \end{tabular}
    }
\end{table}

\section{Discussion\label{sec:discussion}}

The intra-processing methods proposed in this work (see \Secref{sec:methods}) offer a simple yet effective way of fine-tuning neural networks to mitigate classification disparity. 
The proposed differentiable bias proxy functions are directly related to the empirical covariance between the decision boundary and protected attribute (see Appendix~\ref{app:boundary}), similar to the loss functions considered by \citet{Zafar2017,Zafar2019} in the context of linear classification.
There exist criteria for fairness beyond the SPD and EOD \citep{CorbettDavies2018}. Our approaches can be readily combined with other parity measures, e.g. the average odds difference, by deriving proxies similar to those described in \Secref{subsec:class_parity_proxies}.

The presented debiasing techniques differ in several aspects from the related work. While there have been many efforts to debias neural networks resorting to adversarial training \citep{Zhang2018,Kim2019LearningNotToLearn,Reimers2021}, these works have mainly focused on the in-processing setting, where the source of bias is known during training. Notably, the adversarial in-processing technique by \citet{Zhang2018} is similar to the proposed bias GD/A (see \Secref{sec:biasgda}). Next to the main objective, it maximises the cross-entropy term for predicting the protected attribute using an adversary defined on the outputs of the base classifier. The current work offers a different perspective concentrating on the intra-processing scenario and chest X-ray classification. Moreover, to the best of our knowledge, neural network pruning or dropout have never been considered from the perspective of debiasing for group fairness. In particular, it would be interesting to investigate if the proposed pruning procedure may be augmented with other, non-gradient-based criteria for evaluating the influence of neurons (cf. \Eqref{eqn:saliency_grad}).

Although model-agnostic post-processing methods, such as ROC \citep{Kamiran2012} and equalised odds \citep{Hardt2016}, are applied \emph{post hoc}, they assume access to the protected attribute at test time. On the other hand, the methods introduced by \citet{Savani2020} are most closely related to ours. Random perturbation and layer-wise optimisation \citep{Savani2020} are based on computationally expensive zeroth-order optimisation techniques. While adversarial fine-tuning, similarly to bias GD/A, minimises a differentiable bias proxy by mini-batch gradient descent, it resorts to adversarial training, has more hyper- and learnable parameters than our methods, and leverages a different loss function. Furthermore, \citet{Savani2020} focus only on the conventional tabular debiasing benchmarks and natural images and do not apply their methods to clinical data.

Empirically, we have comprehensively evaluated our and related intra- and post-processing methods on various tabular and medical image datasets for fully connected and convolutional neural network architectures (see \Secref{sec:results}). In brief, the experiments on tabular data (see \Secref{sec:results_tabular}) suggest that the proposed intra-processing approaches effectively reduce the bias when it is present and offer improved performance over model-agnostic techniques. We have also demonstrated that pruning and GD/A can reduce the classification disparity in deep chest X-ray classifiers (see \Secref{sec:results_mimiccxr}) for VGG-16 and ResNet-18 networks. However, when the validation set is too small, and the bias of the initial model is too high, it might be more prudent to retrain the model from scratch or use a post-processing algorithm. 

Another contribution of this work is the application to deep chest X-ray classification. A body of previous literature has identified biases within the state-of-the-art models trained on large-scale publicly available datasets \citep{Larrazabal2020,SeyyedKalantari2020,SeyyedKalantari2021}. However, none of these works has investigated the \emph{mitigation} of the identified biases. We believe that the current paper is a valuable contribution to the discussion on achieving group fairness for medical image classifiers. Moreover, the considered intra-processing setting may become particularly pertinent to healthcare applications of machine learning due to the increasing adoption of transfer learning and pre-trained models.

\paragraph{Limitations}

The current study has several limitations. In particular, the experimental setup is restricted to few neural network architectures. It would be interesting to explore other CNN models, such as DenseNet \citep{Huang2017} and SqueezeNet \citep{Iandola2016}. For MIMIC-CXR, we have only focused on two protected-attribute-label pairs, and further investigation is warranted. Furthermore, we have considered binary classification under a binary-valued protected variable. Therefore, for practical use-cases, it is necessary to adapt our methods to the multilevel setting and extend them to the bias measures beyond the SPD and EOD, as explored by \citet{Zafar2019}.

\section{Conclusion\label{sec:conclusion}}

This work considered differentiable proxy functions for statistical parity and equality of opportunity, showing their correspondence to the covariance between the decision boundary of a neural network and the protected attribute. We proposed two novel intra-processing debiasing procedures based on neural network pruning and fine-tuning that utilise these proxies. Our experimental results on tabular data, including \mbox{MIMIC-III}, with fully connected neural networks indicate the viability of the proposed methods, especially compared to model-agnostic post-processing. Furthermore, we applied our and related techniques to mitigate disparity in chest X-ray classifiers trained on \mbox{MIMIC-CXR} and demonstrated that previously reported biases could be reduced without retraining models from scratch. 

\paragraph{Future Work}

In future work, it would be interesting to consider criteria for pruning beyond the gradient-based influence and study a more general setting with multiple classes and protected attribute categories. For tabular data, it could be helpful to investigate the use of pruning in the input layer to remove ``biased'' variables directly. The experimental results on \mbox{MIMIC-CXR} should be extended by more labels and protected attributes. 

\section*{Code and Data Availability}

The code is available at \url{https://github.com/i6092467/diff-bias-proxies}. All of the datasets in our experiments are publicly available.

\section*{Acknowledgements}

The authors would like to thank Dr. Mona Azadkia, Dr. Alexander Marx, and Imant Daunhawer for valuable discussions and feedback. RM was supported by the SNSF grant \#320038189096, EO was supported by the Hasler Foundation grant \#21050. 

\bibliography{bibliography.bib}

\newpage

\appendix

\counterwithin{figure}{section}
\counterwithin{table}{section}
\counterwithin{equation}{section}

\section{Decision Boundary Covariance\label{app:boundary}}

In this appendix, we study the relationship between the differentiable bias proxies in Equations~\ref{eqn:spd_} and \ref{eqn:eod_} (see \Secref{sec:methods} of the main text) and the covariance between the decision boundary of the classifier $f_{\boldsymbol{\theta}}(\cdot)$ and the protected attribute $A$.

\begin{lemma}
    For $\mathcal{X}=\left\{\vx_i\right\}_{i=1}^N$, $\mathcal{Y}=\left\{y_i\right\}_{i=1}^N$, and $\mathcal{A}=\left\{a_i\right\}_{i=1}^N$ and some classifier $f_{\boldsymbol{\theta}}(\cdot)$,  $-\tilde{\mu}_{\mathrm{SPD}}\left(f_{\boldsymbol{\theta}},\,\mathcal{X},\,\mathcal{Y},\,\mathcal{A}\right)\propto\widehat{\mathrm{Cov}}\left(A,\,f_{\boldsymbol{\theta}}\left(\mX\right)\right)$.
\end{lemma}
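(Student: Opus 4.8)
The plan is to exploit the fact that the protected attribute $A$ is binary-valued, which collapses the empirical covariance into a scaled difference of group means—precisely the two terms appearing in $\tilde{\mu}_{\mathrm{SPD}}$. First I would fix notation: let $N_1=\sum_{i=1}^N a_i$ and $N_0=\sum_{i=1}^N(1-a_i)=N-N_1$ be the group sizes, and let $\bar{f}_0=\frac{1}{N_0}\sum_{i=1}^N f_{\boldsymbol{\theta}}(\vx_i)(1-a_i)$ and $\bar{f}_1=\frac{1}{N_1}\sum_{i=1}^N f_{\boldsymbol{\theta}}(\vx_i)a_i$ be the within-group average predictions. With this notation, \Eqref{eqn:spd_} reads simply $\tilde{\mu}_{\mathrm{SPD}}=\bar{f}_0-\bar{f}_1$, so the task reduces to showing $\widehat{\mathrm{Cov}}(A,f_{\boldsymbol{\theta}}(\mX))$ is a negative multiple of $\bar{f}_0-\bar{f}_1$.

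Next I would expand the empirical covariance as $\widehat{\mathrm{Cov}}(A,f_{\boldsymbol{\theta}}(\mX))=\frac{1}{N}\sum_{i=1}^N(a_i-\bar{a})f_{\boldsymbol{\theta}}(\vx_i)$, where $\bar{a}=N_1/N$. The centering term on $f_{\boldsymbol{\theta}}$ can be dropped at the outset because $\sum_{i=1}^N(a_i-\bar{a})=0$, which already removes half the bookkeeping. The remaining step is to substitute the identity $\sum_{i=1}^N f_{\boldsymbol{\theta}}(\vx_i)=N_0\bar{f}_0+N_1\bar{f}_1$ together with $\sum_{i=1}^N a_i f_{\boldsymbol{\theta}}(\vx_i)=N_1\bar{f}_1$ into the expanded sum and simplify. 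The algebra telescopes cleanly: the $N_1\bar{f}_1$ contributions combine through the factor $N-N_1=N_0$, and one is left with
\begin{equation}
    \widehat{\mathrm{Cov}}\left(A,\,f_{\boldsymbol{\theta}}\left(\mX\right)\right)=\frac{N_0N_1}{N^2}\left(\bar{f}_1-\bar{f}_0\right)=-\frac{N_0N_1}{N^2}\,\tilde{\mu}_{\mathrm{SPD}}.
\end{equation}

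Finally I would conclude by observing that $N_0N_1/N^2>0$ is a fixed positive constant determined entirely by the group sizes (assuming both groups are nonempty, so that \Eqref{eqn:spd_} is well defined), whence $-\tilde{\mu}_{\mathrm{SPD}}=\tfrac{N^2}{N_0N_1}\widehat{\mathrm{Cov}}(A,f_{\boldsymbol{\theta}}(\mX))$, giving the claimed proportionality. There is no genuine obstacle here—the statement is essentially an algebraic identity—but the one point requiring care is tracking the proportionality constant correctly and confirming its sign, since the result is an exact equality up to the positive factor $N^2/(N_0N_1)$ rather than a loose bound; the binary structure of $A$ is what makes the covariance degenerate to a single difference-of-means term, and flagging that observation is the conceptual crux of the argument.
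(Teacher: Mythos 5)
Your proof is correct and follows essentially the same route as the paper's: a direct algebraic expansion establishing the exact identity $-\tilde{\mu}_{\mathrm{SPD}}=\tfrac{N^2}{N_0N_1}\,\widehat{\mathrm{Cov}}\bigl(A,\,f_{\boldsymbol{\theta}}(\mX)\bigr)$, where your constant $N^2/(N_0N_1)$ coincides with the paper's factor $N^2/(K(N-K))$ under $K=N_1$. The only cosmetic differences are the direction of the computation (you expand the covariance into group means, whereas the paper rewrites the proxy into the covariance's form) and your use of the centering identity $\sum_{i}(a_i-\bar{a})=0$, which slightly streamlines the bookkeeping.
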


\begin{proof}
    Recall that the covariance is given by
    \[
    \mathrm{Cov}\left(A,\,f_{\boldsymbol{\theta}}\left(\mX\right)\right)=\mathbb{E}\left[Af_{\boldsymbol{\theta}}\left(\mX\right)\right]-\mathbb{E}\left[A\right]\mathbb{E}\left[f_{\boldsymbol{\theta}}(\mX)\right].
    \]
    Let $K=\sum_{i=1}^Na_i$ and $\widebar{f_{\boldsymbol{\theta}}\left(\vx\right)}=\frac{1}{N}\sum_{i}^Nf_{\boldsymbol{\theta}}\left(\vx_i\right)$, consider an empirical estimate
    \begin{equation}
        \widehat{\mathrm{Cov}}\left(A,\,f_{\boldsymbol{\theta}}\left(\mX\right)\right)=\frac{1}{N}\sum_{i=1}^Nf_{\boldsymbol{\theta}}\left(\vx_i\right)a_i-\frac{K}{N^2}\sum_{i=1}^Nf_{\boldsymbol{\theta}}\left(\vx_i\right)=\frac{1}{N}\sum_{i=1}^Nf_{\boldsymbol{\theta}}\left(\vx_i\right)a_i-\frac{K}{N}\widebar{f_{\boldsymbol{\theta}}\left(\vx\right)}.
        \label{eqn:cov_zafar}
    \end{equation}
    Observe that 
    \begin{align}
        \begin{split}
            -\tilde{\mu}_{\mathrm{SPD}}=&\frac{\sum_{i=1}^Nf_{\boldsymbol{\theta}}\left(\vx_i\right)a_i}{\sum_{i=1}^Na_i}-\frac{\sum_{i=1}^Nf_{\boldsymbol{\theta}}\left(\vx_i\right)\left(1-a_i\right)}{\sum_{i=1}^N\left(1-a_i\right)}=\frac{1}{K}\sum_{i=1}^Nf_{\boldsymbol{\theta}}\left(\vx_i\right)a_i-\frac{N}{N-K}\widebar{f_{\boldsymbol{\theta}}\left(\vx\right)}-\\
            &\frac{1}{N-K}\sum_{i=1}^Nf_{\boldsymbol{\theta}}\left(\vx_i\right)a_i=\frac{N}{K(N-K)}\sum_{i=1}^Nf_{\boldsymbol{\theta}}\left(\vx_i\right)a_i-\frac{NK}{K(N-K)}\widebar{f_{\boldsymbol{\theta}}\left(\vx\right)}.
        \end{split}
        \label{eqn:spd__rewritten}
    \end{align}
    Note that (\ref{eqn:cov_zafar}) $\propto$ (\ref{eqn:spd__rewritten}) by a factor of $\frac{N^2}{K(N-K)}$, constant in $\boldsymbol{\theta}$.
\end{proof}

\begin{lemma}
    For $\mathcal{X}=\left\{\vx_i\right\}_{i=1}^N$, $\mathcal{Y}=\left\{y_i\right\}_{i=1}^N$, and $\mathcal{A}=\left\{a_i\right\}_{i=1}^N$ and some classifier $f_{\boldsymbol{\theta}}(\cdot)$,  $-\tilde{\mu}_{\mathrm{EOD}}\left(f_{\boldsymbol{\theta}},\,\mathcal{X},\,\mathcal{Y},\,\mathcal{A}\right)\propto\widehat{\mathrm{Cov}}\left(A,\,f_{\boldsymbol{\theta}}\left(\mX\right)\,|\,Y=1\right)$.
\end{lemma}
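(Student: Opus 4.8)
The plan is to reduce this statement to Lemma~1 by restricting attention to the index set on which $Y=1$. The key observation is that every summand in $\tilde{\mu}_{\mathrm{EOD}}$ (see \Eqref{eqn:eod_}) carries a factor $y_i\in\{0,1\}$, so only the indices with $y_i=1$ contribute to the numerators and denominators. Introduce $\mathcal{I}_1=\left\{i:y_i=1\right\}$, set $N_1=\sum_{i=1}^N y_i=\left|\mathcal{I}_1\right|$, and set $K_1=\sum_{i=1}^N a_iy_i$, the number of points in $\mathcal{I}_1$ with $a_i=1$. The one point requiring mild care is confirming that the denominators collapse correctly: $\sum_{i=1}^N a_iy_i=K_1$ and $\sum_{i=1}^N\left(1-a_i\right)y_i=N_1-K_1$. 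With this, $\tilde{\mu}_{\mathrm{EOD}}$ is \emph{exactly} the SPD proxy $\tilde{\mu}_{\mathrm{SPD}}$ of \Eqref{eqn:spd_} evaluated on the subsample $\left\{\left(\vx_i,y_i,a_i\right):i\in\mathcal{I}_1\right\}$ of size $N_1$ with attribute-positive count $K_1$.

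Next I would spell out the target quantity so that it visibly lines up with \Eqref{eqn:cov_zafar}. The empirical conditional covariance is nothing but the ordinary empirical covariance of $A$ and $f_{\boldsymbol{\theta}}\left(\mX\right)$ computed over $\mathcal{I}_1$ alone,
\[
\widehat{\mathrm{Cov}}\left(A,\,f_{\boldsymbol{\theta}}\left(\mX\right)\,\middle|\,Y=1\right)=\frac{1}{N_1}\sum_{i\in\mathcal{I}_1}f_{\boldsymbol{\theta}}\left(\vx_i\right)a_i-\frac{K_1}{N_1^2}\sum_{i\in\mathcal{I}_1}f_{\boldsymbol{\theta}}\left(\vx_i\right),
\]
which is structurally identical to \Eqref{eqn:cov_zafar} with the pair $(N,K)$ replaced by $(N_1,K_1)$ and all sums ranging over $\mathcal{I}_1$.

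With these two observations in place, the result follows by invoking Lemma~1 verbatim on the conditional subsample: the algebraic rearrangement carried out in \Eqref{eqn:spd__rewritten}, with $(N,K)$ replaced by $(N_1,K_1)$, shows that $-\tilde{\mu}_{\mathrm{EOD}}$ equals the displayed conditional covariance up to the multiplicative factor $\frac{N_1^2}{K_1\left(N_1-K_1\right)}$, which is constant in $\boldsymbol{\theta}$. This yields the claimed proportionality.

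There is no genuine obstacle here: once one notices that the $y_i$ factors simply select the subpopulation $\mathcal{I}_1$, the EOD case is literally the SPD case of Lemma~1 applied to the conditioned sample. The entire content of the argument is the bookkeeping identifying $N_1$ and $K_1$ as the effective sample size and attribute count; everything downstream is a reuse of the proof of Lemma~1.
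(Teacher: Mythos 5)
Your proof is correct, and it is organized differently from the paper's. The paper proves this lemma by direct computation: it writes the empirical conditional covariance with the indicator weights $y_i$ kept inside the sums (setting $M=\sum_{i=1}^N y_i$ and $R=\sum_{i=1}^N a_iy_i$), separately expands $-\tilde{\mu}_{\mathrm{EOD}}$ in the same terms, and reads off the proportionality constant $\frac{M^2}{R(M-R)}$ --- in effect re-running the algebra of Lemma~1 with $y_i$ factors attached everywhere. You instead reduce the statement to Lemma~1 itself: since every summand carries a factor $y_i\in\{0,1\}$, both the proxy $\tilde{\mu}_{\mathrm{EOD}}$ and the conditional covariance estimator are exactly their unconditional counterparts evaluated on the subsample $\mathcal{I}_1=\left\{i:y_i=1\right\}$, so Lemma~1 applies verbatim with $(N,K)$ replaced by $(N_1,K_1)$; your identification $(N_1,K_1)=(M,R)$ shows the resulting constant $\frac{N_1^2}{K_1\left(N_1-K_1\right)}$ agrees with the paper's $\frac{M^2}{R(M-R)}$, constant in $\boldsymbol{\theta}$. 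Your reduction is more modular and avoids duplicating the algebra, and it makes transparent \emph{why} the two lemmas are the same fact; the paper's version is self-contained and exhibits the constant directly in full-sample notation without introducing a subsample. One hypothesis is implicit in both arguments and worth stating: the denominators require $0<K_1<N_1$, i.e., both protected groups must appear among the positive examples, which is precisely the non-degeneracy Lemma~1 needs on the conditioned sample.
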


\begin{proof}
    Recall that, by the law of total covariance, 
    \[
    \begin{split}
        \mathrm{Cov}\left(A,\,f_{\boldsymbol{\theta}}\left(\mX\right)\,|\,Y=1\right)=&\mathbb{E}\left[\left(A-\mathbb{E}\left[A\,|\,Y=1\right]\right)\left(f_{\boldsymbol{\theta}}\left(\mX\right)-\mathbb{E}\left[f_{\boldsymbol{\theta}}\left(\mX\right)\,|\,Y=1\right]\right)\,|\,Y=1\right]=\\ &\mathbb{E}\left[Af_{\boldsymbol{\theta}}\left(\mX\right)\,|\,Y=1\right]-\mathbb{E}\left[A\,|\,Y=1\right]\mathbb{E}\left[f_{\boldsymbol{\theta}}\left(\mX\right)\,|\,Y=1\right].    
    \end{split}
    \]
    Let $M=\sum_{i=1}^Ny_i$, $R=\sum_{i=1}^Na_iy_i$, and $\widebar{f_{\boldsymbol{\theta}}\left(\vx\right)}=\frac{1}{N}\sum_{i}^Nf_{\boldsymbol{\theta}}\left(\vx_i\right)$, consider an empirical estimate
    \begin{align}
        \begin{split}
            \widehat{\mathrm{Cov}}\left(A,\,f_{\boldsymbol{\theta}}\left(\mX\right)\,|\,Y=1\right)=&\frac{\sum_{i=1}^Nf_{\boldsymbol{\theta}}\left(\vx_i\right)a_iy_i}{\sum_{i=1}^Ny_i}-\frac{\sum_{i=1}^Na_iy_i}{\sum_{i=1}^Ny_i} \cdot \frac{\sum_{i=1}^Nf_{\boldsymbol{\theta}}\left(\vx_i\right)y_i}{\sum_{i=1}^Ny_i}=\\
            &\frac{1}{M}\sum_{i=1}^Nf_{\boldsymbol{\theta}}\left(\vx_i\right)a_iy_i-\frac{R}{M^2}\sum_{i=1}^Nf_{\boldsymbol{\theta}}\left(\vx_i\right)y_i.
        \end{split}
        \label{eqn:cov_eod}
    \end{align}
    Observe that 
    \begin{align}
        \begin{split}
            -\tilde{\mu}_{\mathrm{EOD}}=&\frac{\sum_{i=1}^Nf_{\boldsymbol{\theta}}\left(\vx_i\right)y_ia_i}{\sum_{i=1}^Ny_ia_i}-\frac{\sum_{i=1}^Nf_{\boldsymbol{\theta}}\left(\vx_i\right)y_i(1-a_i)}{\sum_{i=1}^Ny_i(1-a_i)}=\\
            &\frac{1}{R}\sum_{i=1}^Nf_{\boldsymbol{\theta}}\left(\vx_i\right)y_ia_i-\frac{1}{M-R}\sum_{i=1}^Nf_{\boldsymbol{\theta}}\left(\vx_i\right)y_i-\frac{1}{M-R}\sum_{i=1}^Nf_{\boldsymbol{\theta}}\left(\vx_i\right)y_ia_i=\\
            &\frac{M}{R(M-R)}\sum_{i=1}^Nf_{\boldsymbol{\theta}}\left(\vx_i\right)y_ia_i-\frac{1}{M-R}\sum_{i=1}^Nf_{\boldsymbol{\theta}}\left(\vx_i\right)y_i.
        \end{split}
        \label{eqn:eod__rewritten}
    \end{align}
    Note that (\ref{eqn:cov_eod}) $\propto$ (\ref{eqn:eod__rewritten}) by a factor of $\frac{M^2}{R(M-R)}$, constant in $\boldsymbol{\theta}$.
\end{proof}

\section{Datasets\label{app:data}}

Nonclinical tabular benchmarking datasets used in our experiments (see \Secref{sec:results_tabular}) are publicly available in the IBM AIF 360 library \citep{Bellamy2018}. Table~\ref{tab:datasets} below summarises all real-world datasets considered throughout the paper.

\begin{table}[H]
    \scriptsize
    \centering
    \caption{Summary of the datasets. $N_\mathrm{train}$, $N_\mathrm{valid}$, and $N_\mathrm{test}$ are the sizes of the training, validation, and test sets, respectively; $D$ is the input dimensionality after pre-processing; and $A$ is the protected attribute.\label{tab:datasets}}
    \begin{tabular}{p{2.1cm}p{0.7cm}p{0.7cm}p{0.7cm}p{1.0cm}p{2cm}p{1.8cm}}
        \toprule
        \textbf{Dataset} & \textbf{$N_{\mathrm{train}}$} & \textbf{$N_{\mathrm{valid}}$} & \textbf{$N_{\mathrm{test}}$} & \textbf{$D$} & \textbf{$A$} & \textbf{Architecture} \\
        \toprule
        \textbf{Adult} & 27,133 & 9,044 & 9,045 & 98 & \textit{Sex} & FCNN\\
        \specialrule{0.1pt}{0.5pt}{0.5pt}
        \textbf{Bank} & 18,292 & 6,098 & 6,098 & 57 & \textit{Age} & FCNN\\
        \specialrule{0.1pt}{0.5pt}{0.5pt}
        \textbf{COMPAS} & 3,700 & 1,233 & 1,234 & 401 & \textit{Race} & FCNN\\
        \specialrule{0.1pt}{0.5pt}{0.5pt}
        \multirow{3}{*}{\textbf{MIMIC-III}} & 21,595 & 7,199 & 7,199 & 43 & \textit{Age} & FCNN \\
        & 21,595 & 7,199 & 7,199 & 44 & \textit{Marital Status} & FCNN \\
        & 21,595 & 7,199 & 7,199 & 44 & \textit{Insurance Type} & FCNN \\
        \specialrule{0.1pt}{0.5pt}{0.5pt}
        \multirow{2}{*}{\textbf{MIMIC-CXR}} & 5,528 & 3,368 & 3,426 & 224$\times$224 & \textit{Sex} & CNN \\
        & 3,984 & 930 & 1,122 & 224$\times$224 & \textit{Ethnicity} & CNN \\
        \bottomrule
    \end{tabular}
\end{table}

\section{Syntehtic Data\label{app:synth}}

In addition to the real-world data (see \Secref{sec:expsetup} and Appendix~\ref{app:data}), we conducted experiments on two synthetic datasets adapted from the literature (for the results, see Appendix~\ref{app:sensitivity}). Below we summarise their generative process. 

\paragraph{Synthetic by \citet{Loh2019}} 

\citet{Loh2019} performed extensive simulation experiments comparing subgroup identification methods. Their simulation models are suitable for benchmarking debiasing algorithms. We adopted one of their synthetic datasets with the following generative procedure. For $N$ independent data points: 
\begin{enumerate}
    \item Randomly draw features with marginal distributions given by $X_{1,\,2,\,3,\,7,\,8,\,9,\,10}\sim\mathcal{N}\left(0,\,1\right)$, $X_4\sim\mathrm{Exp}(1)$, $X_5\sim\mathrm{Bernoulli}(\frac{1}{2})$, $X_6\sim\mathrm{Cat}(10)$ and $\mathrm{corr}\left(X_2,X_3\right)=0.5$ and $\mathrm{corr}\left(X_j,X_k\right)=0.5$, for $j,\,k\in\left\{7,\,8,\,9,\,10\right\}$, $j\neq k$.
    \item Randomly draw the protected attribute $A\sim\mathrm{Bernoulli}\left(\frac{1}{2}\right)$.
    \item Let 
    \begin{equation}
        \mathrm{logit}=\log\frac{\mathbb{P}(Y=1)}{\mathbb{P}(Y=0)}=\frac{1}{2}\left(X_1+X_2-X_5\right)+2\alpha A\mathbf{1}_{\left\{X_6\Mod{2}=1\right\}},
        \label{eqn:lohetal}
    \end{equation}
    where $\mathbf{1}_{\{\cdot\}}$ is an indicator function and $\alpha>0$ is the parameter controlling the magnitude of the correlation between $Y$ and $A$.
    \item Randomly draw the binary classification label $Y\sim\mathrm{Bernoulli}\left(\frac{\exp\left(\mathrm{logit}\right)}{\exp\left(\mathrm{logit}\right)+1}\right)$.
\end{enumerate}
Although this dataset is relatively simplistic, the simulation allows controlling the magnitude of classification disparity in the original classifier. In practice, we observe that the higher the value of $\alpha$, the higher the absolute SPD or EOD of the classifier trained on features $X_{1:10}$ and labels $Y$.

\paragraph{Synthetic by \citet{Zafar2017}}

\citet{Zafar2017} proposed another simple simulation model for generating datasets with different degrees of disparity in classification outcomes. We extended their model\footnote{\url{https://github.com/mbilalzafar/fair-classification}} to higher dimensionality and classes that are not linearly separable. The data generating process is specified below. For $N$ independent data points:
\begin{enumerate}
    \item Randomly draw the binary classification label $Y\sim\textrm{Bernoulli}\left(\frac{1}{2}\right)$.
    \item If $Y=0$, randomly draw $\tilde{\mX}\sim\mathcal{N}_2\left(\begin{bmatrix}-2\\-2\end{bmatrix},\,\begin{bmatrix}10 & 1\\ 1 & 3\end{bmatrix}\right)$; \\ otherwise
    \mbox{$\tilde{\mX}\sim\mathcal{N}_2\left(\begin{bmatrix}2\\2\end{bmatrix},\,\begin{bmatrix}5 & 1\\ 1 & 5\end{bmatrix}\right)$}.
    \item Let 
    \begin{equation}
        \tilde{\mX}'=\begin{bmatrix}\cos(\vartheta) & -\sin(\vartheta) \\ \sin(\vartheta) & \cos(\vartheta) \end{bmatrix}\tilde{\mX},
        \label{eqn:zafaretal}
    \end{equation}
    where $\vartheta$ is the rotation angle controlling the correlation between between $Y$ and $A$.
    \item Let $$\mathbb{P}\left(A=1\right)=\frac{p\left(\tilde{\mX}'\,|\,Y=1\right)}{p\left(\tilde{\mX}'\,|\,Y=1\right)+p\left(\tilde{\mX}'\,|\,Y=0\right)}.$$
    \item Randomly draw the protected attribute $A\sim\textrm{Bernoulli}\left(\mathbb{P}\left(A=1\right)\right)$.
    \item Let $g(\tilde{\vx})=\boldsymbol{\Theta}_2\textrm{ReLU}\left(\boldsymbol{\Theta}_1\textrm{ReLU}\left(\boldsymbol{\Theta}_0\tilde{\vx} + \vb_0\right) + \vb_1\right) + \vb_2$, where $\boldsymbol{\Theta}_0\in\mathbb{R}^{h\times 2}$, $\boldsymbol{\Theta}_1\in\mathbb{R}^{h\times h}$, $\boldsymbol{\Theta}_2\in\mathbb{R}^{p\times h}$ and $\vb_0\in\mathbb{R}^h$, $\vb_1\in\mathbb{R}^h$, $\vb_2\in\mathbb{R}^p$ are randomly generated matrices and vectors.
    \item Let $\mX=g\left(\tilde{\mX}\right)$ be a $p$-dimensional real-valued feature vector.
\end{enumerate}

Similar to the dataset above, this simulation allows controlling the degree of bias by adjusting the parameter $\vartheta$. In practice, values of $\vartheta$ closer to zero result in classifiers with higher absolute SPDs and EODs.

\section{Implementation Details\label{app:imp}}

In this appendix, we present further implementation details. For a general overview of the experimental setup, see \Secref{sec:expsetup} of the main text.

\subsection{Train-validation-test Split}

All tabular datasets were split into 60\% train, 20\% validation, and 20\% test instances. For MIMIC-CXR, we performed a 50\%-25\%-25\% train-validation-test split stratified by \emph{patients} to avoid data leakage. To mitigate other sources of variability in the original models, we balanced the training, validation, and test sets w.r.t. the number of healthy and pathological cases. Since the prior work \citep{Larrazabal2020} has investigated imbalance w.r.t. the protected attribute as a potential cause of bias in deep chest X-ray classifiers, we sampled the images of patients so that the training set had 75\% and 25\% from the privileged and unprivileged groups, respectively. For validation and test sets, the ratio of privileged and unprivileged groups was 50\%-50\%.

\subsection{Model Development}

All experiments and methods were implemented in PyTorch (v 1.9.1) \citep{Paszke2017}. For all tabular datasets, we used the same architecture and training scheme for the classifier $f_{\boldsymbol{\theta}}(\cdot)$. We trained a fully connected feedforward neural network with ten hidden layers, 32 units each, ReLU activations, dropout ($p=0.05$), and batch normalisation (see Table~\ref{tab:arch}). The network was trained for 1,000 epochs with early stopping by minimising the binary cross-entropy loss using the Adam optimiser \citep{Kingma2015} with \texttt{ReduceLROnPlateau} learning rate schedule and mini-batch size of 64. 

\begin{table}[H]
    \caption{Fully connected neural network architecture used in debiasing experiments on tabular data (see \Secref{sec:results_tabular}). \texttt{nn} stands for \texttt{torch.nn}; \texttt{F} stands for \texttt{torch.nn.functional}; \texttt{input\_dim} corresponds to the number of features $d$. \label{tab:arch}}
    \scriptsize
    \centering
    \begin{tabular}{ll}
        \toprule
        & \textbf{Classifier} \\
        \midrule
        \textbf{1} & \texttt{nn.Linear(input\_dim, 32)} \\
        & \texttt{F.relu()} \\
        & \texttt{nn.Dropout(0.05)} \\
        & \texttt{nn.BatchNorm1d(32)} \\
        \textbf{2} & \texttt{for l in range(10):} \\
        & \texttt{\;\;\; nn.Linear(32, 32)}\\
        & \texttt{\;\;\; F.relu()}\\
        & \texttt{\;\;\; nn.Dropout(0.05)}\\
        & \texttt{\;\;\; nn.BatchNorm1d(32)}\\
        \textbf{3} & \texttt{out = nn.Linear(32, 1)} \\
        \textbf{4} & \texttt{torch.sigmoid()} \\
        \bottomrule
    \end{tabular}
\end{table}

For MIMIC-CXR, we used classifiers based on the VGG-16 \citep{Simonyan2014} and ResNet-18 \citep{He2016} networks. We initialised classifiers with pre-trained weights and trained them using the binary cross-entropy loss and the AdamW optimiser \citep{Loshchilov2019} with default parameters, a mini-batch size of 32, and an initial learning rate of $10^{-4}$ with \texttt{StepLR} learning rate schedule. The networks were trained for a maximum of 20 epochs with early stopping on the validation set.

\subsection{Method Implementation}

We used the following implementation of the debiasing algorithms:
\begin{itemize}
    \item \textsc{Random} and \textsc{Adv. Intra}: we used the original implementation by \citet{Savani2020} available at \url{https://github.com/abacusai/intraprocessing_debiasing}.
    \item \textsc{ROC} and \textsc{Eq. Odds}: we used the implementation available in the AIF 360 toolkit at \url{https://github.com/Trusted-AI/AIF360}.
    \item \textsc{Pruning} and \textsc{Bias GD/A}: the PyTorch (v 1.9.1) \citep{Paszke2017} implementation is available at \url{https://github.com/i6092467/diff-bias-proxies}.
\end{itemize}

\subsection{Hyperparameters}

Table~\ref{tab:hyperparams} provides hyperparameter values for the pruning and bias GD/A. For both algorithms, the most sensitive hyperparameter is the lower bound on performance $\varrho$ (see Algorithms~\ref{alg:fair_pruning} and \ref{alg:biasGD}), which effectively controls the decrease in performance as a result of debiasing. For \mbox{MIMIC-III}, the same hyperparameter configuration was used across all protected attributes (``\emph{age}'', ``\emph{marital status}'', and ``\emph{insurance type}''). For MIMIC-CXR experiments with VGG-16, we performed pruning only in the convolutional layers. For ResNet-18, we only pruned the first \texttt{conv1} block, comprising a single convolutional layer. During experimentation, we observed little gain from pruning additional downstream layers. 

\begin{table}[H]
    \centering
    \caption{Hyperparameter values used for the \mbox{(\emph{a}) pruning} and \mbox{(\emph{b}) bias} GD/A algorithms throughout the experiments. Herein, ``\# units'' corresponds to the number of units pruned per step and relates to the hyperparameter $B$ from Algorithm~\ref{alg:fair_pruning}. $\varrho_{\mathrm{SPD}}$ and $\varrho_{\mathrm{EOD}}$ denote lower bounds on the balanced accuracy for the SPD and EOD experiments, respectively. For the bias GD/A, $\eta$ denotes the learning rate; $M$ is the mini-batch size; and $E$ is the number of epochs. \label{tab:hyperparams}}
    
    \subtable[\centering \textsc{Pruning}]{
        \scriptsize
        \begin{tabular}{p{1.9cm}p{1.2cm}p{0.65cm}p{0.65cm}}
            \toprule
            \textbf{Dataset} & \textbf{\# Units} & \textbf{$\varrho_{\mathrm{SPD}}$} & \textbf{$\varrho_{\mathrm{EOD}}$} \\
            \toprule
            Adult & 1 & 0.52 & 0.75 \\
            \specialrule{0.1pt}{0.5pt}{0.5pt}
            Bank & 1 & 0.80 & 0.70 \\
            \specialrule{0.1pt}{0.5pt}{0.5pt}
            COMPAS & 1 & 0.55 & 0.55 \\
            \specialrule{0.1pt}{0.5pt}{0.5pt}
            MIMIC-III & 1 & 0.60 & 0.60 \\
            \specialrule{0.1pt}{0.5pt}{0.5pt}
            \makecell[l]{MIMIC-CXR,\\ \textit{Sex}} & 1,500 & --- & 0.65 \\
            \specialrule{0.1pt}{0.5pt}{0.5pt}
            \makecell[l]{MIMIC-CXR,\\ \textit{Ethnicity}} & 1,500 & --- & 0.55 \\
            \bottomrule
        \end{tabular}
    }
    \subtable[\centering \textsc{Bias GD/A}]{
        \scriptsize
        \begin{tabular}{p{1.9cm}p{0.8cm}p{0.3cm}p{0.3cm}p{0.65cm}p{0.65cm}}
            \toprule
            \textbf{Dataset} & \textbf{$\eta$} & \textbf{$M$} & \textbf{$E$} & \textbf{$\varrho_{\mathrm{SPD}}$} & \textbf{$\varrho_{\mathrm{EOD}}$} \\
            \toprule
            Adult & 1.0e-5 & 256 & 200 & 0.62 & 0.80 \\
            \specialrule{0.1pt}{0.5pt}{0.5pt}
            Bank & 1.0e-5 & 256 & 200 & 0.70 & 0.70 \\
            \specialrule{0.1pt}{0.5pt}{0.5pt}
            COMPAS & 1.0e-5 & 256 & 200 & 0.61 & 0.58 \\
            \specialrule{0.1pt}{0.5pt}{0.5pt}
            MIMIC-III & 1.0e-5 & 256 & 100 & 0.60 & 0.60 \\
            \specialrule{0.1pt}{0.5pt}{0.5pt}
            \makecell[l]{MIMIC-CXR,\\ \textit{Sex}} & 1.0e-5 & 32 & 10 & --- & 0.75 \\
            \specialrule{0.1pt}{0.5pt}{0.5pt}
            \makecell[l]{MIMIC-CXR,\\ \textit{Ethnicity}} & 7.5e-6 & 32 & 8 & --- & 0.55 \\
            \bottomrule
        \end{tabular}
    }
\end{table}

For the random perturbation intra-processing, we used multiplicative noise distributed as $\mathcal{N}\left(1,\,0.01\right)$ and performed 101 perturbations to maximise the constrained objective proposed by \citet{Savani2020} with an upper/lower bound on the bias of $\pm$0.05 and a margin of 0.01. The same bias bounds were used for the ROC post-processing procedure.
For adversarial intra-processing, on tabular datasets, we used hyperparameter values from the original work by \citet{Savani2020}: a critic network with three hidden layers, a learning rate of 10\textsuperscript{-3}, 16 training epochs, a mini-batch size of 64, $\lambda=0.75$, 201 and 101 critic and actor training steps, respectively. Similarly to the random perturbation, we utilised the constrained objective with an upper/lower bound on the bias of $\pm$0.05 and a margin of 0.01. Different from the tabular experiments, for MIMIC-CXR and the protected attribute ``\emph{sex}'', we used a learning rate of 10\textsuperscript{-4}, five training epochs, an upper/lower bound on the bias of $\pm$0.03, and a margin of 0.02. For MIMIC-CXR and ``\emph{ethnicity}'', we trained networks for four epochs under the constrained objective with an upper/lower bound of $\pm$0.05, and a margin of 0.01. We attempted tuning the number of epochs, critic and actor steps, and $\lambda$, however, we did not observe improved results in both MIMIC-CXR experiments.

\section{Further Results\label{app:results}}

\subsection{Further Quantitative Results}

\begin{table}[H]
    \centering
    \caption{Equal opportunity difference and balanced accuracy attained on the \emph{validation} set before and after debiasing VGG-16 (\emph{a},\emph{b}) and ResNet-18 (\emph{c},\emph{d}) trained on \mbox{MIMIC-CXR} to predict \mbox{(\emph{a},\emph{c}) enlarged} cardiomediastinum (with the protected attribute ``\emph{sex}'') and \mbox{(\emph{b},\emph{d}) pneumonia} (with the protected attribute ``\emph{ethnicity}''). Test set results can be found in Table~\ref{tab:mimic_results}, \Secref{sec:results_mimiccxr}.\label{tab:mimic_results_val}}
    
    \subtable[\centering Enlarged CM, \textit{Sex}; VGG-16]{
        \scriptsize
        \begin{tabular}{p{1.7cm}p{1.6cm}p{1.6cm}}
            \toprule
            \textbf{Method} & \textbf{EOD} & \textbf{BA} \\
            \toprule
            \cellcolor{Gainsboro}\textsc{Standard} & \cellcolor{Gainsboro}\tentry{-0.06}{0.03} & \cellcolor{Gainsboro}\tentry{0.77}{0.01} \\
            \textsc{Random} & \tentry{-0.02}{0.01} & \tentry{0.76}{0.01} \\
            \textsc{ROC} & \tentry{-0.05}{0.01} & \tentry{0.74}{0.04} \\
            \textsc{Eq. Odds} & \tentry{\;0.00}{0.01} & \tentry{0.75}{0.01} \\
            \textsc{Adv. Intra} & \tentry{-0.01}{0.01} & \tentry{0.97}{0.02} \\
            \underline{\textsc{Pruning}} & \tentry{\;0.00}{0.03} & \tentry{0.76}{0.01} \\
            \underline{\textsc{Bias GD/A}} & \tentry{-0.01}{0.02} & \tentry{0.76}{0.01} \\
            \bottomrule
        \end{tabular}
    }
    \subtable[\centering Pneumonia, \textit{Ethnicity}; VGG-16]{
        \scriptsize
        \begin{tabular}{p{1.7cm}p{1.6cm}p{1.6cm}}
            \toprule
            \textbf{Method} & \textbf{EOD} & \textbf{BA} \\
            \toprule
            \cellcolor{Gainsboro}\textsc{Standard} & \cellcolor{Gainsboro}\tentry{-0.14}{0.05} & \cellcolor{Gainsboro}\tentry{0.75}{0.01} \\
            \textsc{Random} & \tentry{-0.07}{0.04} & \tentry{0.73}{0.02} \\
            \textsc{ROC} & \tentry{-0.05}{0.01} & \tentry{0.65}{0.06} \\
            \textsc{Eq. Odds} & \tentry{\;0.00}{0.01} & \tentry{0.71}{0.02} \\
            \textsc{Adv. Intra} & \tentry{-0.06}{0.05} & \tentry{0.93}{0.05} \\
            \underline{\textsc{Pruning}} & \tentry{\;0.01}{0.02} & \tentry{0.72}{0.03} \\
            \underline{\textsc{Bias GD/A}} & \tentry{\;0.00}{0.03} & \tentry{0.73}{0.02} \\
            \bottomrule
        \end{tabular}
    }
    
    \vspace{0.25cm}
    
    \subtable[\centering Enlarged CM, \textit{Sex}; ResNet-18]{
        \scriptsize
        \begin{tabular}{p{1.7cm}p{1.6cm}p{1.6cm}}
            \toprule
            \textbf{Method} & \textbf{EOD} & \textbf{BA} \\
            \toprule
            \cellcolor{Gainsboro}\textsc{Standard} & \cellcolor{Gainsboro}\tentry{-0.06}{0.04} & \cellcolor{Gainsboro}\tentry{0.76}{0.01} \\
            \textsc{Random}& \tentry{\;0.00}{0.01} & \tentry{0.74}{0.02} \\
            \textsc{ROC} & \tentry{-0.04}{0.01} & \tentry{0.74}{0.04} \\
            \textsc{Eq. Odds} & \tentry{\;0.00}{0.01} & \tentry{0.74}{0.01} \\
            \textsc{Adv. Intra} & \tentry{\;0.00}{0.01} & \tentry{0.99}{0.02} \\
            \underline{\textsc{Pruning}} & \tentry{\;0.00}{0.02} & \tentry{0.74}{0.02} \\
            \underline{\textsc{Bias GD/A}} & \tentry{\;0.00}{0.02} & \tentry{0.76}{0.01} \\
            \bottomrule
        \end{tabular}
    }
    \subtable[\centering Pneumonia, \textit{Ethnicity}; ResNet-18]{
        \scriptsize
        \begin{tabular}{p{1.7cm}p{1.6cm}p{1.6cm}}
            \toprule
            \textbf{Method} & \textbf{EOD} & \textbf{BA} \\
            \toprule
            \cellcolor{Gainsboro}\textsc{Standard} & \cellcolor{Gainsboro}\tentry{-0.13}{0.05} & \cellcolor{Gainsboro}\tentry{0.74}{0.01} \\
            \textsc{Random}& \tentry{-0.01}{0.01} & \tentry{0.67}{0.04} \\
            \textsc{ROC} & \tentry{-0.04}{0.01} & \tentry{0.65}{0.05} \\
            \textsc{Eq. Odds} & \tentry{\;0.00}{0.01} & \tentry{0.71}{0.02} \\
            \textsc{Adv. Intra} & \tentry{\;0.00}{0.00} & \tentry{1.00}{0.00} \\
            \underline{\textsc{Pruning}} & \tentry{\;0.00}{0.03} & \tentry{0.71}{0.02} \\
            \underline{\textsc{Bias GD/A}} & \tentry{\;0.00}{0.03} & \tentry{0.73}{0.02} \\
            \bottomrule
        \end{tabular}
    }
\end{table}


\subsection{Comparison with Adversarial In-processing\label{app:zhang}}

Since the proposed bias GD/A procedure (see Algorithm~\ref{alg:biasGD}) bears similarity to the adversarial in-processing method by \citet{Zhang2018}, we additionally evaluated models trained from scratch with an adversary for predicting the protected attribute based on the classifier's output as described by \citet{Zhang2018}. The evaluation was performed on the same datasets (see Table~\ref{tab:datasets}) and with the same setup as in the experiments from the main body of the paper. Notably, debiased models were trained directly on the training set and \emph{not} on the validation data, as for intra- and post-processing. For the tabular datasets, we used the implementation available in the AIF 360 toolkit \citep{Bellamy2018}. For the MIMIC-CXR, we adapted the publicly available implementation\footnote{\url{https://github.com/choprashweta/Adversarial-Debiasing}} by \citet{Chopra2020}.

Table~\ref{tab:zhang_results} reports bias and balanced accuracy of the models trained using adversarial in-processing across all dataset-protected-attribute pairs and network architectures. Encouragingly, the method by \citet{Zhang2018} performed comparably or slightly worse on average than pruning and bias GD/A (cf. Tables~\ref{tab:tabular_results}, \ref{tab:mimiciii_results}, and \ref{tab:mimic_results}). For MIMIC-CXR, we observed a pattern similar to intra-processing where the method failed to remove the bias associated with the attribute ``\emph{ethnicity}''. Thus, in the latter setting, post-processing, which adjusts the model’s predictions at test time based on the protected attribute value, is the only effective family of techniques considered.

\begin{table}[H]
    \centering
    \caption{Test-set bias and balanced accuracy attained by the networks trained using adversarial in-processing. Models were trained separately for the SPD and EOD. \label{tab:zhang_results}}
    \scriptsize
    \begin{tabular}{p{3.0cm}p{0.25cm}p{1.6cm}p{1.6cm}p{0.25cm}p{1.6cm}p{1.6cm}}
        \toprule
        \textbf{Experiment} &  & \textbf{SPD} & \textbf{BA} &  & \textbf{EOD} & \textbf{BA}\\
        \toprule
        \makecell[l]{\textbf{Adult}, \\ \textit{Sex}} & & \tentry{-0.02}{0.00} & \tentry{0.55}{0.01} &  & \tentry{\;0.03}{0.02} & \tentry{0.79}{0.01} \\
        \specialrule{0.1pt}{0.5pt}{0.5pt}
        \makecell[l]{\textbf{Bank}, \\ \textit{Age}} & & \tentry{\;0.06}{0.04} & \tentry{0.69}{0.07} &  & \tentry{-0.04}{0.06} & \tentry{0.86}{0.01} \\
        \specialrule{0.1pt}{0.5pt}{0.5pt}
        \makecell[l]{\textbf{COMPAS}, \\ \textit{Race}} & & \tentry{\;0.05}{0.04} & \tentry{0.60}{0.03} &  & \tentry{\;0.07}{0.04} & \tentry{0.62}{0.02} \\
        \specialrule{0.1pt}{0.5pt}{0.5pt}
        \makecell[l]{\textbf{MIMIC-III}, \\ \textit{Age}} & & \tentry{-0.04}{0.02} & \tentry{0.68}{0.04} &  & \tentry{\;0.06}{0.03} & \tentry{0.70}{0.02} \\
        \specialrule{0.1pt}{0.5pt}{0.5pt}
        \makecell[l]{\textbf{MIMIC-III}, \\ \textit{Marital Status}} & & \tentry{\;0.02}{0.03} & \tentry{0.74}{0.02} &  & \tentry{\;0.00}{0.04} & \tentry{0.73}{0.01} \\
        \specialrule{0.1pt}{0.5pt}{0.5pt}
        \makecell[l]{\textbf{MIMIC-III}, \\ \textit{Insurance Type}} & & \tentry{-0.02}{0.02} & \tentry{0.71}{0.03} &  & \tentry{\;0.07}{0.03} & \tentry{0.72}{0.02} \\
        \specialrule{0.1pt}{0.5pt}{0.5pt}
        \makecell[l]{\textbf{MIMIC-CXR}, \\ \textit{Sex}, VGG-16} & & --- & --- &  & \tentry{\;0.00}{0.03} & \tentry{0.75}{0.01} \\
        \specialrule{0.1pt}{0.5pt}{0.5pt}
        \makecell[l]{\textbf{MIMIC-CXR}, \\ \textit{Sex}, ResNet-18} & & --- & --- &  & \tentry{-0.01}{0.13} & \tentry{0.72}{0.03} \\
        \specialrule{0.1pt}{0.5pt}{0.5pt}
        \makecell[l]{\textbf{MIMIC-CXR}, \\ \textit{Ethnicity}, VGG-16} & & --- & --- &  & \tentry{-0.13}{0.05} & \tentry{0.71}{0.03} \\
        \specialrule{0.1pt}{0.5pt}{0.5pt}
        \makecell[l]{\textbf{MIMIC-CXR}, \\ \textit{Ethnicity}, ResNet-18} & & --- & --- &  & \tentry{-0.13}{0.07} & \tentry{0.71}{0.02} \\
        \bottomrule
    \end{tabular}
\end{table}

In summary, the results above suggest that, despite relying on the smaller validation set and resorting to editing the model's parameters \emph{post hoc}, for the considered datasets, intra-processing methods achieve the model's bias and performance that are comparable to those of the network retrained from scratch on the original training set. This experiment further supports the viability of the intra-processing approach adopted by us.

\subsection{Sensitivity to Initial Conditions\label{app:sensitivity}}

As observed before (see Section~\ref{sec:results_tabular}), the performance of the classifier debiased using pruning or bias GD/A can vary considerably, for instance, for the Adult dataset (see Table~\ref{tab:tabular_results}). To investigate the sensitivity of the proposed methods to initial conditions, particularly, to the degree of bias within the original classifier, we performed further experiments on two synthetic datasets described in Appendix~\ref{app:synth}. We trained and debiased FC neural networks (see Table~\ref{tab:arch}) while varying the correlation between the label and protected attribute. Intuitively, we expect debiasing to be less effective when the bias of the classifier is high.

For the dataset by \citet{Loh2019}, we trained and debiased classifiers under different values of the parameter $\alpha\in\left[0.0, 2.5\right]$ (see \Eqref{eqn:lohetal}). The resulting SPD varies between approximately 0.0 to 0.4, and the EOD is between 0.0 and 0.5. \mbox{Table~\ref{tab:results_sensitivity}}(\emph{a}) shows changes in the BA and SPD of the original classifier and the network obtained after pruning and bias GD/A. Notably, both methods exhibit similar patterns. For $\alpha\in[0.0, 1.5]$, debiased classifiers retain a BA of approximately 0.63, which corresponds to an unbiased performance and reduce the bias to zero with low variance. In contrast, for $\alpha>1.5$, the variance of the disparity across seeds increases considerably, e.g., for $\alpha=2.5$, pruning yields an SPD of 0.01$\pm$0.13. Similar patterns occur when debiasing w.r.t. the EOD (see Table~\mbox{\ref{tab:results_sensitivity}(\emph{b})}).

For the dataset by \citet{Zafar2017}, we varied the value of the parameter $\vartheta\in\left[0.7, 1.2\right]$ (see \Eqref{eqn:zafaretal}). Tables~\mbox{\ref{tab:results_sensitivity}(\emph{c-d})} contain the results across the range of rotation angles. Analogously to the synthetic dataset by \citet{Loh2019}, we observe either a decrease in the BA or an increase in the residual bias for lower values of the parameter $\vartheta$, i.e. under a higher initial bias. In summary, while proposed techniques successfully mitigate disparity, when the bias of the original classifier is relatively high, debiasing may either fail or lead to a considerable decrease in predictive performance.

\begin{table}[H]
    \centering
    \caption{Changes in the balanced accuracy and bias of the original and debiased classifier, given by the \mbox{SPD (\emph{a, c})} and \mbox{EOD (\emph{b, d})} across varying simulation parameters for the synthetic datasets by \citet{Loh2019} \mbox{(\emph{a-b})} and \citet{Zafar2017} \mbox{(\emph{c-d})}. Averages and standard deviations are reported across ten independent simulations. \label{tab:results_sensitivity}}
    
    \vspace{0.25cm}
    
    \subtable[\centering Synthetic by \citet{Loh2019}, SPD]{
        \scriptsize
        \begin{tabular}{p{0.5cm}p{1.8cm}p{1.8cm}p{1.8cm}p{1.8cm}p{1.8cm}p{1.8cm}}
            \toprule
            \textbf{$\alpha$} & \textbf{\textsc{Standard}, BA} & \textbf{\textsc{Pruning}, BA} & \textbf{\textsc{Bias GD/A}, BA} & \textbf{\textsc{Standard}, Bias} & \textbf{\textsc{Pruning}, Bias} & \textbf{\textsc{Bias GD/A}, Bias} \\
            \toprule
            0.1 & \tentry{0.63}{0.00} & \tentry{0.63}{0.01} & \tentry{0.62}{0.01} & \tentry{-0.05}{0.02} & \tentry{\;0.01}{0.02} & \tentry{\;0.00}{0.02} \\
            0.5 & \tentry{0.64}{0.01} & \tentry{0.63}{0.02} & \tentry{0.64}{0.01} & \tentry{-0.22}{0.02} & \tentry{-0.02}{0.03} & \tentry{-0.02}{0.02} \\
            1.0 & \tentry{0.68}{0.01} & \tentry{0.63}{0.02} & \tentry{0.66}{0.01} & \tentry{-0.35}{0.03} & \tentry{\;0.00}{0.01} & \tentry{-0.02}{0.03} \\
            1.5 & \tentry{0.71}{0.01} & \tentry{0.63}{0.02} & \tentry{0.66}{0.03} & \tentry{-0.39}{0.03} & \tentry{\;0.03}{0.05} & \tentry{-0.01}{0.04} \\
            2.0 & \tentry{0.73}{0.01} & \tentry{0.63}{0.04} & \tentry{0.56}{0.07} & \tentry{-0.39}{0.03} & \tentry{-0.03}{0.10} & \tentry{\;0.03}{0.08} \\
            2.5 & \tentry{0.73}{0.00} & \tentry{0.65}{0.03} & \tentry{0.57}{0.07} & \tentry{-0.41}{0.03} & \tentry{\;0.01}{0.13} & \tentry{\;0.04}{0.09} \\
            \bottomrule
        \end{tabular}
    }
    
    \vspace{0.25cm}
    
    \subtable[\centering Synthetic by \citet{Loh2019}, EOD]{
        \scriptsize
        \begin{tabular}{p{0.5cm}p{1.8cm}p{1.8cm}p{1.8cm}p{1.8cm}p{1.8cm}p{1.8cm}}
            \toprule
            \textbf{$\alpha$} & \textbf{\textsc{Standard}, BA} & \textbf{\textsc{Pruning}, BA} & \textbf{\textsc{Bias GD/A}, BA} & \textbf{\textsc{Standard}, Bias} & \textbf{\textsc{Pruning}, Bias} & \textbf{\textsc{Bias GD/A}, Bias} \\
            \toprule
            0.1 & \tentry{0.63}{0.00} & \tentry{0.62}{0.01} & \tentry{0.62}{0.01} & \tentry{-0.04}{0.02} & \tentry{\;0.01}{0.02} & \tentry{\;0.01}{0.02} \\
            0.5 & \tentry{0.64}{0.01} & \tentry{0.63}{0.01} & \tentry{0.64}{0.01} & \tentry{-0.22}{0.02} & \tentry{-0.01}{0.03} & \tentry{-0.02}{0.02} \\
            1.0 & \tentry{0.68}{0.01} & \tentry{0.64}{0.01} & \tentry{0.66}{0.01} & \tentry{-0.39}{0.05} & \tentry{-0.01}{0.02} & \tentry{-0.03}{0.02} \\
            1.5 & \tentry{0.71}{0.01} & \tentry{0.63}{0.01} & \tentry{0.64}{0.03} & \tentry{-0.45}{0.04} & \tentry{\;0.02}{0.04} & \tentry{-0.01}{0.03} \\
            2.0 & \tentry{0.73}{0.01} & \tentry{0.63}{0.03} & \tentry{0.63}{0.02} & \tentry{-0.47}{0.04} & \tentry{\;0.01}{0.05} & \tentry{\;0.00}{0.04} \\
            2.5 & \tentry{0.73}{0.00} & \tentry{0.62}{0.05} & \tentry{0.64}{0.05} & \tentry{-0.50}{0.05} & \tentry{\;0.04}{0.11} & \tentry{-0.04}{0.08} \\
            \bottomrule
        \end{tabular}
    }
    
    \vspace{0.25cm}
    
    \subtable[\centering Synthetic by \citet{Zafar2017}, SPD]{
        \scriptsize
        \begin{tabular}{p{0.5cm}p{1.8cm}p{1.8cm}p{1.8cm}p{1.8cm}p{1.8cm}p{1.8cm}}
            \toprule
            \textbf{$\vartheta$} & \textbf{\textsc{Standard}, BA} & \textbf{\textsc{Pruning}, BA} & \textbf{\textsc{Bias GD/A}, BA} & \textbf{\textsc{Standard}, Bias} & \textbf{\textsc{Pruning}, Bias} & \textbf{\textsc{Bias GD/A}, Bias} \\
            \toprule
            1.2 & \tentry{0.87}{0.00} & \tentry{0.84}{0.03} & \tentry{0.87}{0.00} & \tentry{-0.03}{0.01} & \tentry{-0.01}{0.02} & \tentry{-0.01}{0.01} \\
            1.1 & \tentry{0.87}{0.00} & \tentry{0.72}{0.04} & \tentry{0.83}{0.03} & \tentry{-0.13}{0.01} & \tentry{\;0.01}{0.02} & \tentry{-0.03}{0.04} \\
            1.0 & \tentry{0.87}{0.00} & \tentry{0.58}{0.03} & \tentry{0.74}{0.04} & \tentry{-0.23}{0.02} & \tentry{-0.09}{0.03} & \tentry{-0.02}{0.04} \\
            0.9 & \tentry{0.87}{0.00} & \tentry{0.58}{0.04} & \tentry{0.66}{0.04} & \tentry{-0.33}{0.01} & \tentry{-0.11}{0.04} & \tentry{-0.02}{0.03} \\
            0.8 & \tentry{0.87}{0.00} & \tentry{0.62}{0.06} & \tentry{0.59}{0.02} & \tentry{-0.42}{0.01} & \tentry{-0.18}{0.08} & \tentry{-0.02}{0.03} \\
            0.7 & \tentry{0.87}{0.00} & \tentry{0.60}{0.04} & \tentry{0.57}{0.02} & \tentry{-0.49}{0.01} & \tentry{-0.18}{0.07} & \tentry{-0.03}{0.03} \\
            \bottomrule
        \end{tabular}
    }
    
    \vspace{0.25cm}
    
    \subtable[\centering Synthetic by \citet{Zafar2017}, EOD]{
        \scriptsize
        \begin{tabular}{p{0.5cm}p{1.8cm}p{1.8cm}p{1.8cm}p{1.8cm}p{1.8cm}p{1.8cm}}
            \toprule
            \textbf{$\vartheta$} & \textbf{\textsc{Standard}, BA} & \textbf{\textsc{Pruning}, BA} & \textbf{\textsc{Bias GD/A}, BA} & \textbf{\textsc{Standard}, Bias} & \textbf{\textsc{Pruning}, Bias} & \textbf{\textsc{Bias GD/A}, Bias} \\
            \toprule
            1.2 & \tentry{0.87}{0.00} & \tentry{0.76}{0.06} & \tentry{0.87}{0.00} & \tentry{-0.05}{0.01} & \tentry{-0.01}{0.01} & \tentry{-0.01}{0.01} \\
            1.1 & \tentry{0.87}{0.00} & \tentry{0.76}{0.04} & \tentry{0.86}{0.02} & \tentry{-0.09}{0.01} & \tentry{-0.01}{0.01} & \tentry{-0.03}{0.03} \\
            1.0 & \tentry{0.87}{0.00} & \tentry{0.74}{0.04} & \tentry{0.86}{0.01} & \tentry{-0.12}{0.02} & \tentry{\;0.00}{0.03} & \tentry{-0.05}{0.03} \\
            0.9 & \tentry{0.87}{0.00} & \tentry{0.73}{0.04} & \tentry{0.84}{0.02} & \tentry{-0.16}{0.02} & \tentry{-0.03}{0.04} & \tentry{-0.05}{0.02} \\
            0.8 & \tentry{0.87}{0.00} & \tentry{0.77}{0.06} & \tentry{0.83}{0.03} & \tentry{-0.20}{0.02} & \tentry{-0.05}{0.06} & \tentry{-0.07}{0.05} \\
            0.7 & \tentry{0.87}{0.00} & \tentry{0.75}{0.05} & \tentry{0.83}{0.03} &  \tentry{-0.23}{0.02} & \tentry{-0.05}{0.07} & \tentry{-0.09}{0.03} \\
            \bottomrule
        \end{tabular}
    }
\end{table}

\subsection{Further Qualitative Results}

\begin{figure}[H]
    \centering
    \subfigure[{\scriptsize\centering $\;\;\;$ Adult, \textsc{Pruning},$\;\;\;$ SPD}]{
        \includegraphics[width=0.25\linewidth]{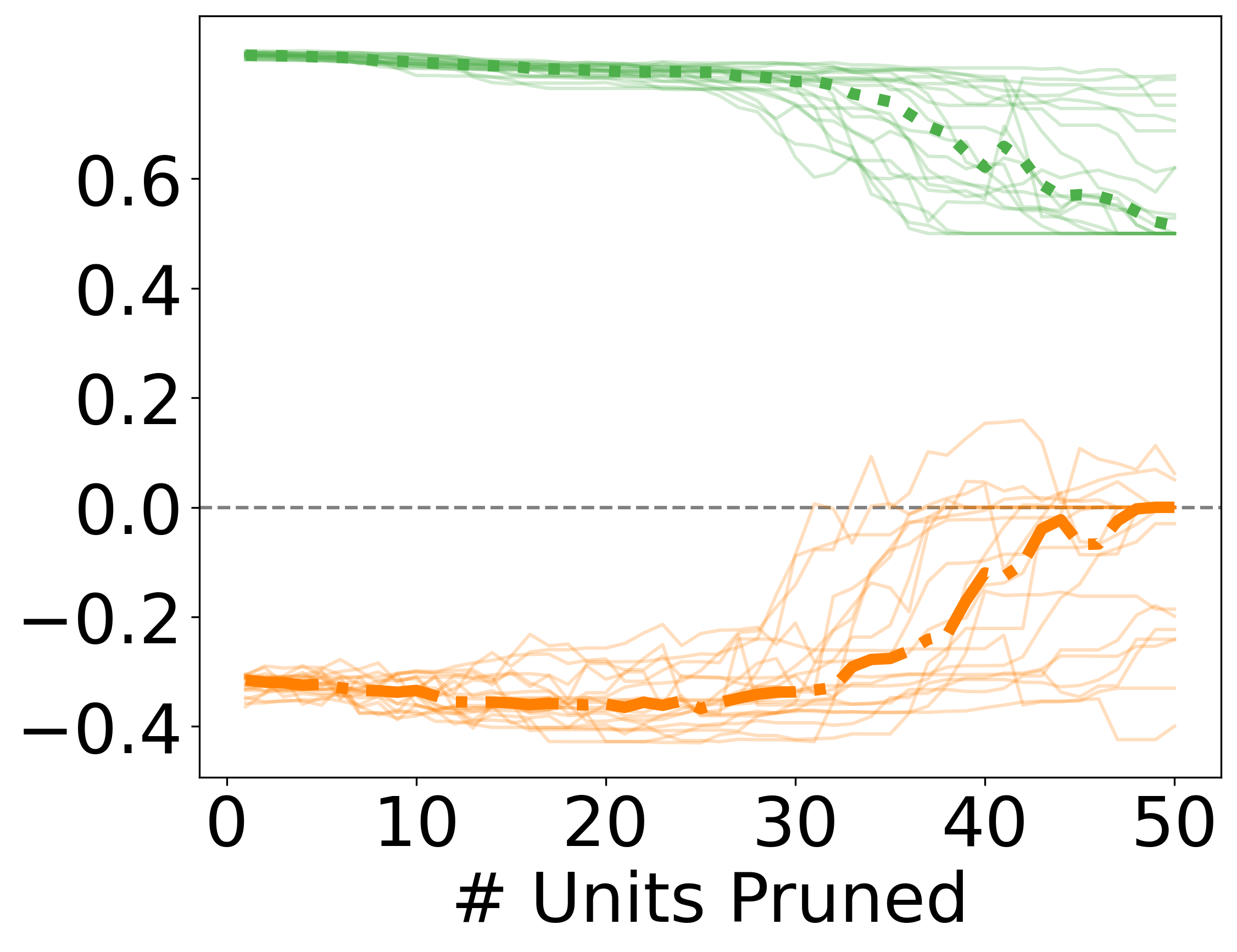}
    } 
    \subfigure[{\scriptsize\centering Adult, \textsc{Pruning}, EOD}]{  
        \includegraphics[width=0.21\linewidth]{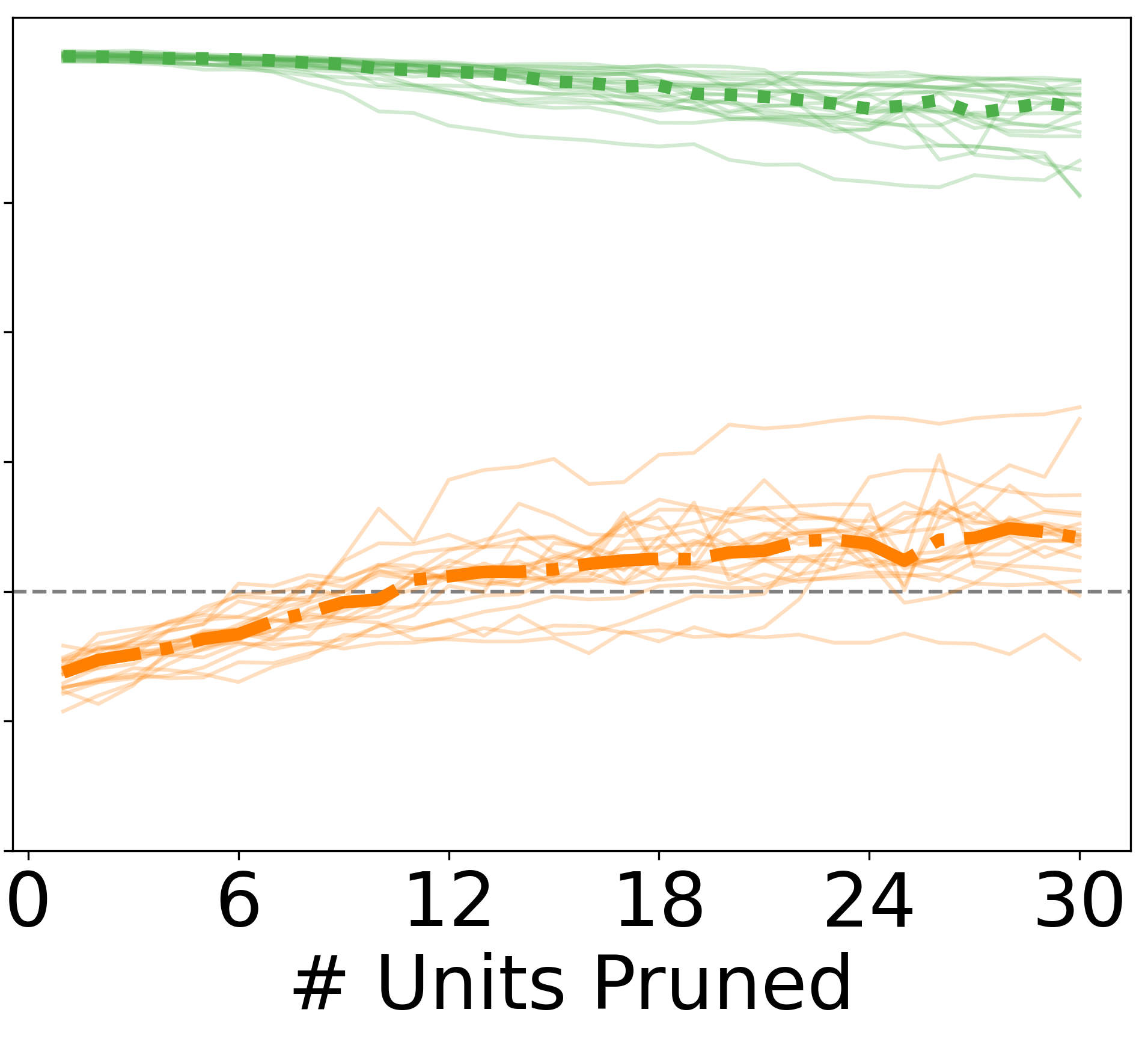}
    }
    \subfigure[{\scriptsize\centering Adult, \textsc{Bias GD/A}, SPD}]{
        \includegraphics[width=0.21\linewidth]{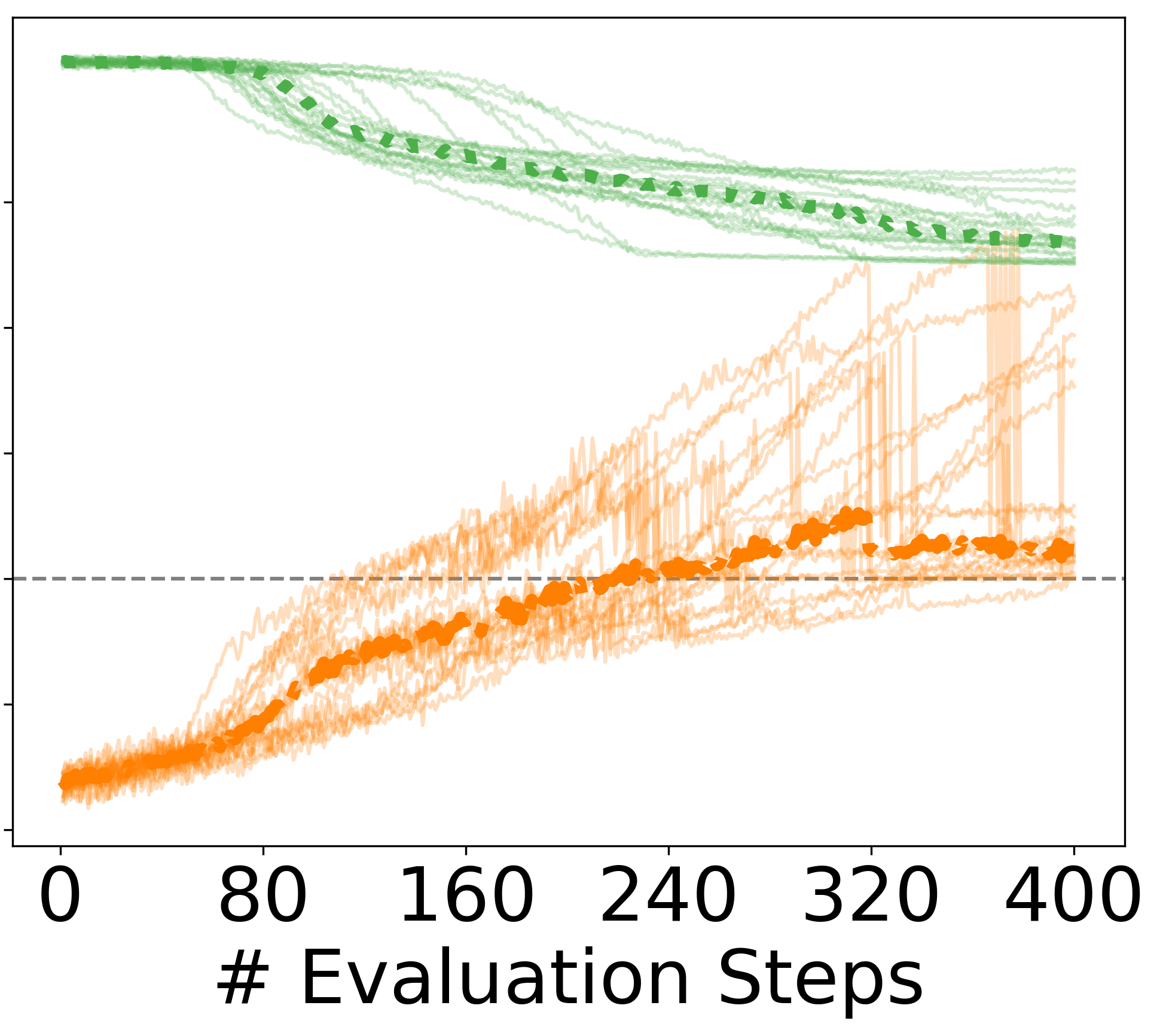}
    }
    \subfigure[{\scriptsize\centering Adult, \textsc{Bias GD/A}, EOD}]{        
        \includegraphics[width=0.21\linewidth]{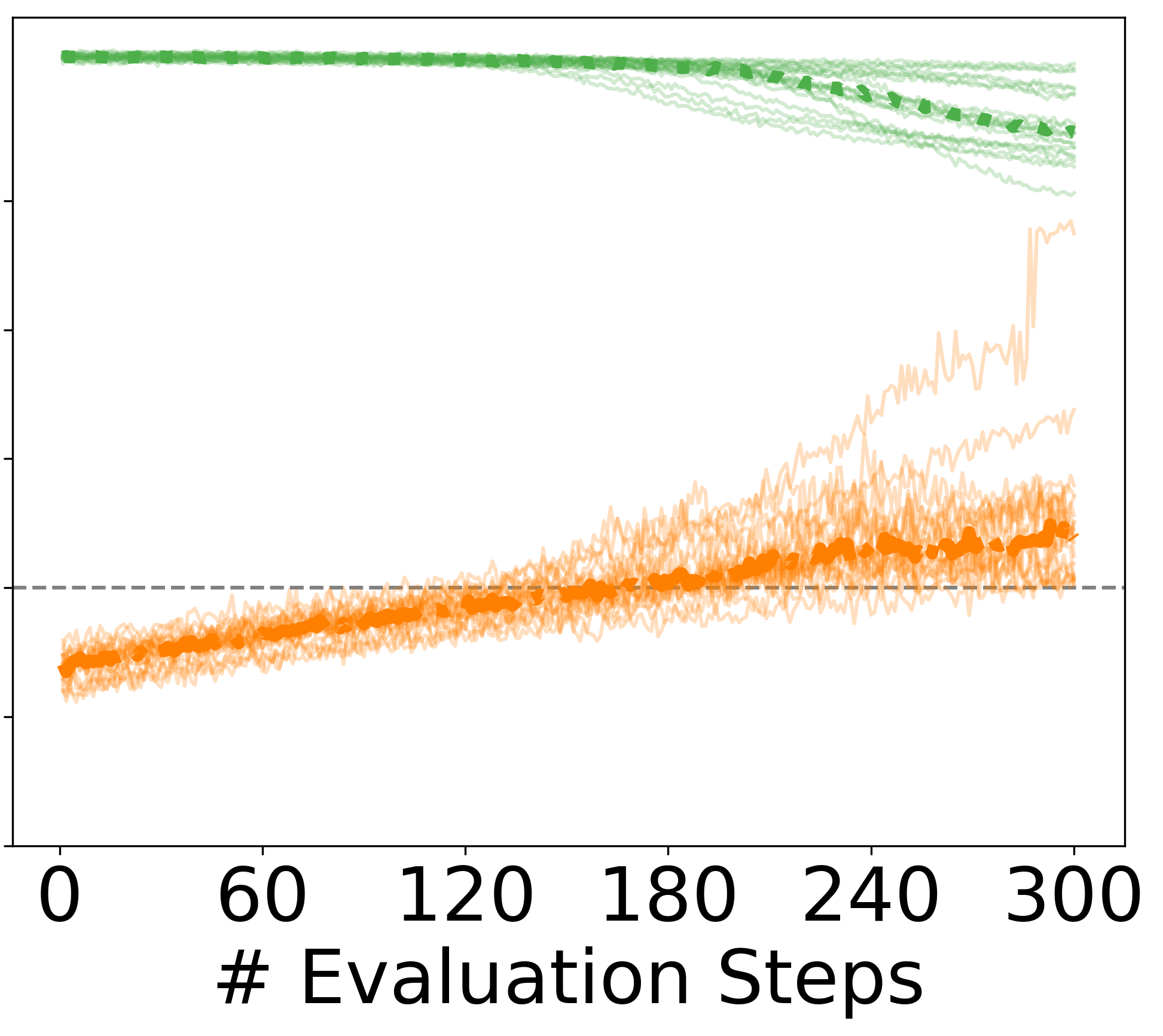}
    }
    \subfigure[{\scriptsize\centering $\;\;\;\;$ Bank, \textsc{Pruning},$\;\;\;\;$ SPD}]{
        \includegraphics[width=0.25\linewidth]{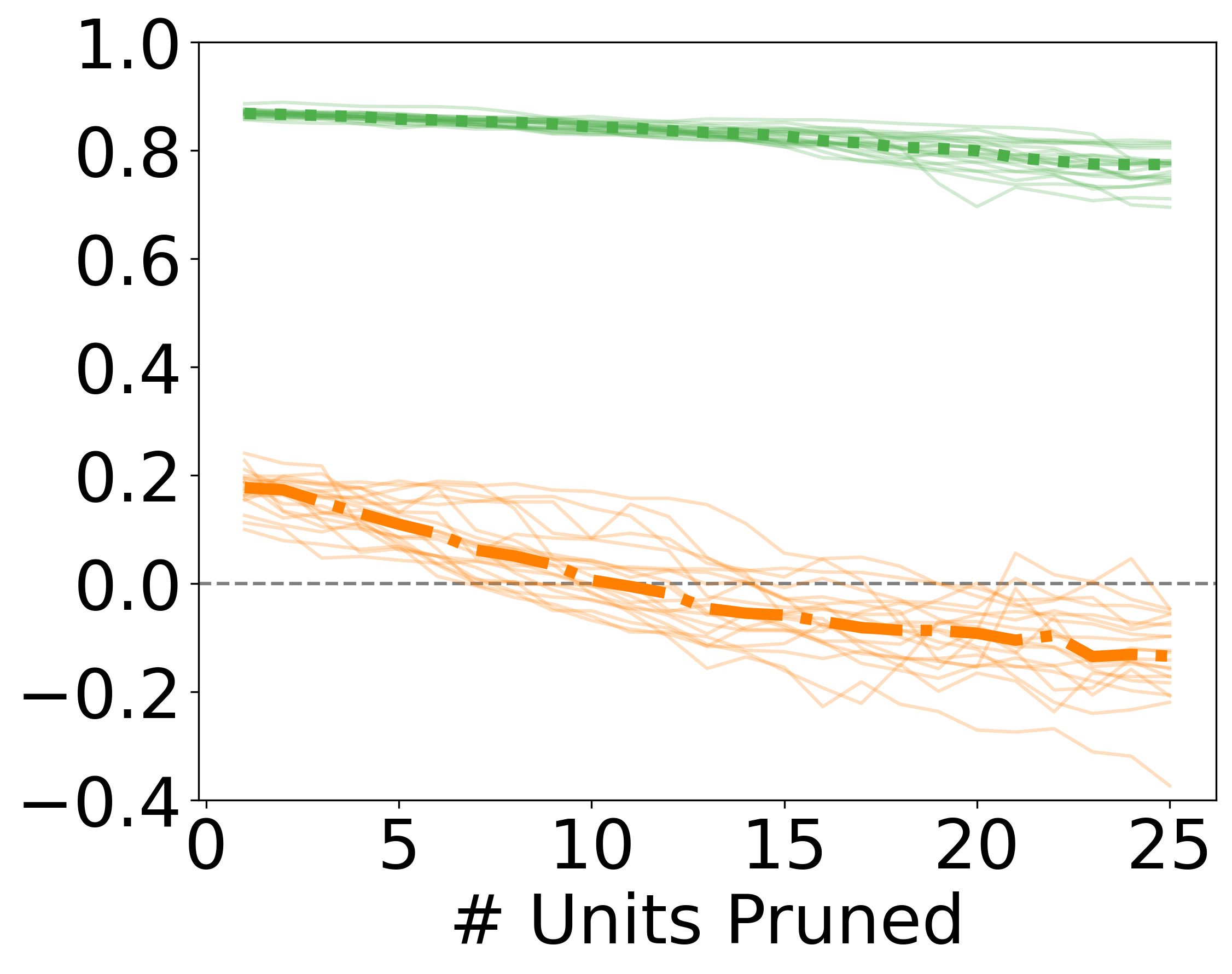}
    }
    \subfigure[{\scriptsize\centering $\;\;\;$ Bank, \textsc{Pruning},$\;\;\;$ EOD}]{    
        \includegraphics[width=0.21\linewidth]{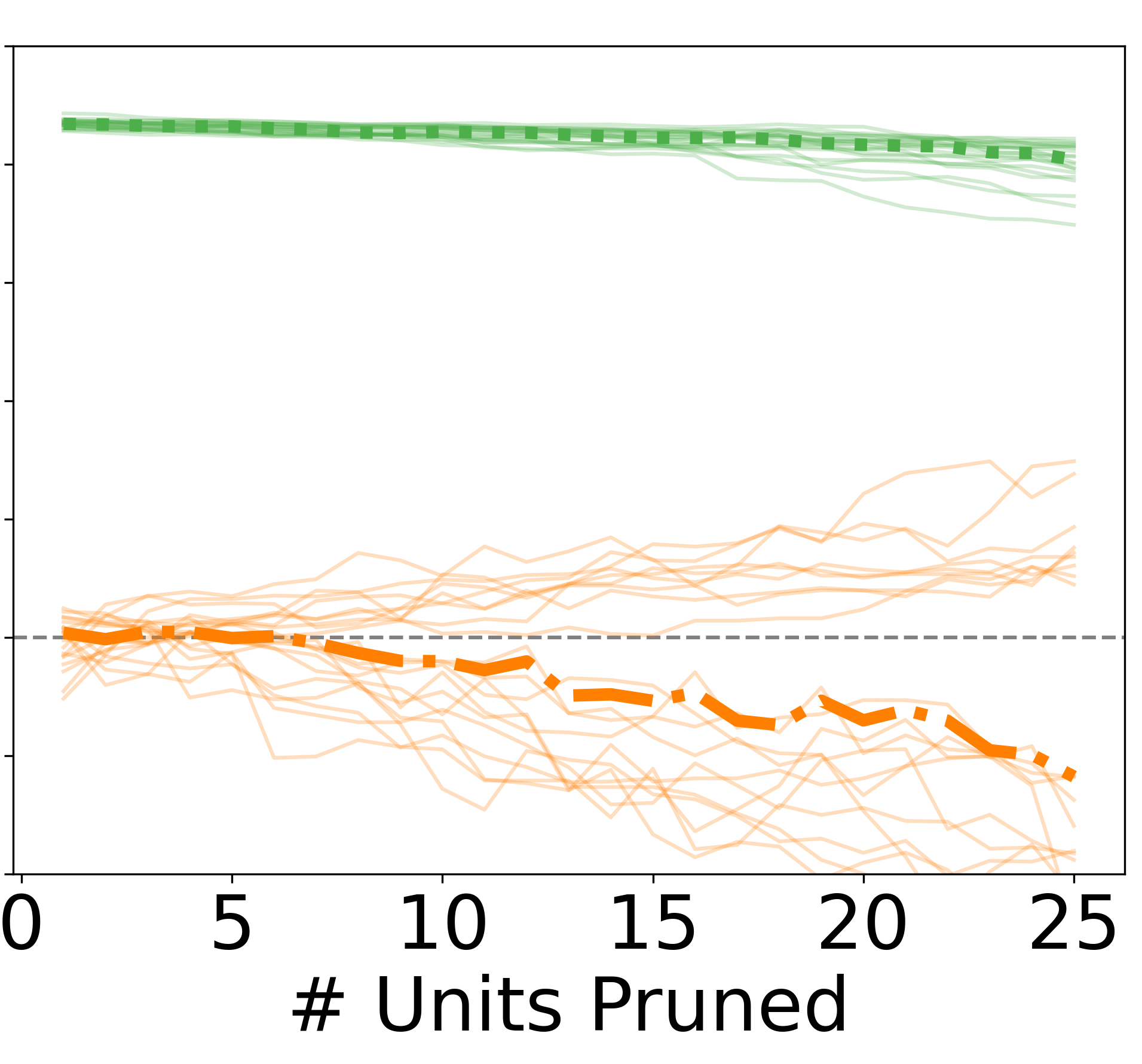}
    }
    \subfigure[{\scriptsize\centering Bank, \textsc{Bias GD/A}, SPD}]{
        \includegraphics[width=0.215\linewidth]{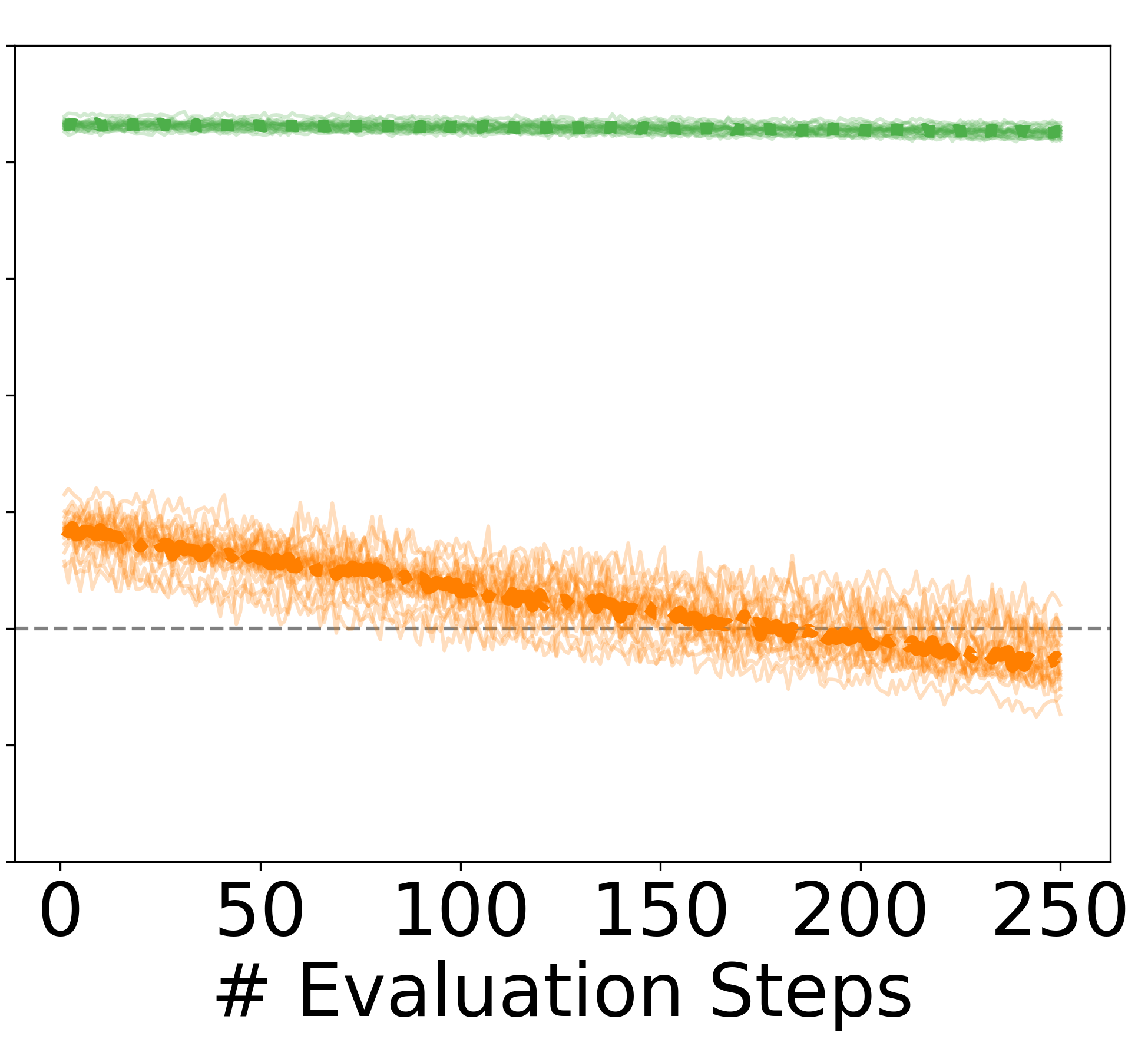}
    }
    \subfigure[{\scriptsize\centering Bank, \textsc{Bias GD/A}, EOD}]{        
        \includegraphics[width=0.215\linewidth]{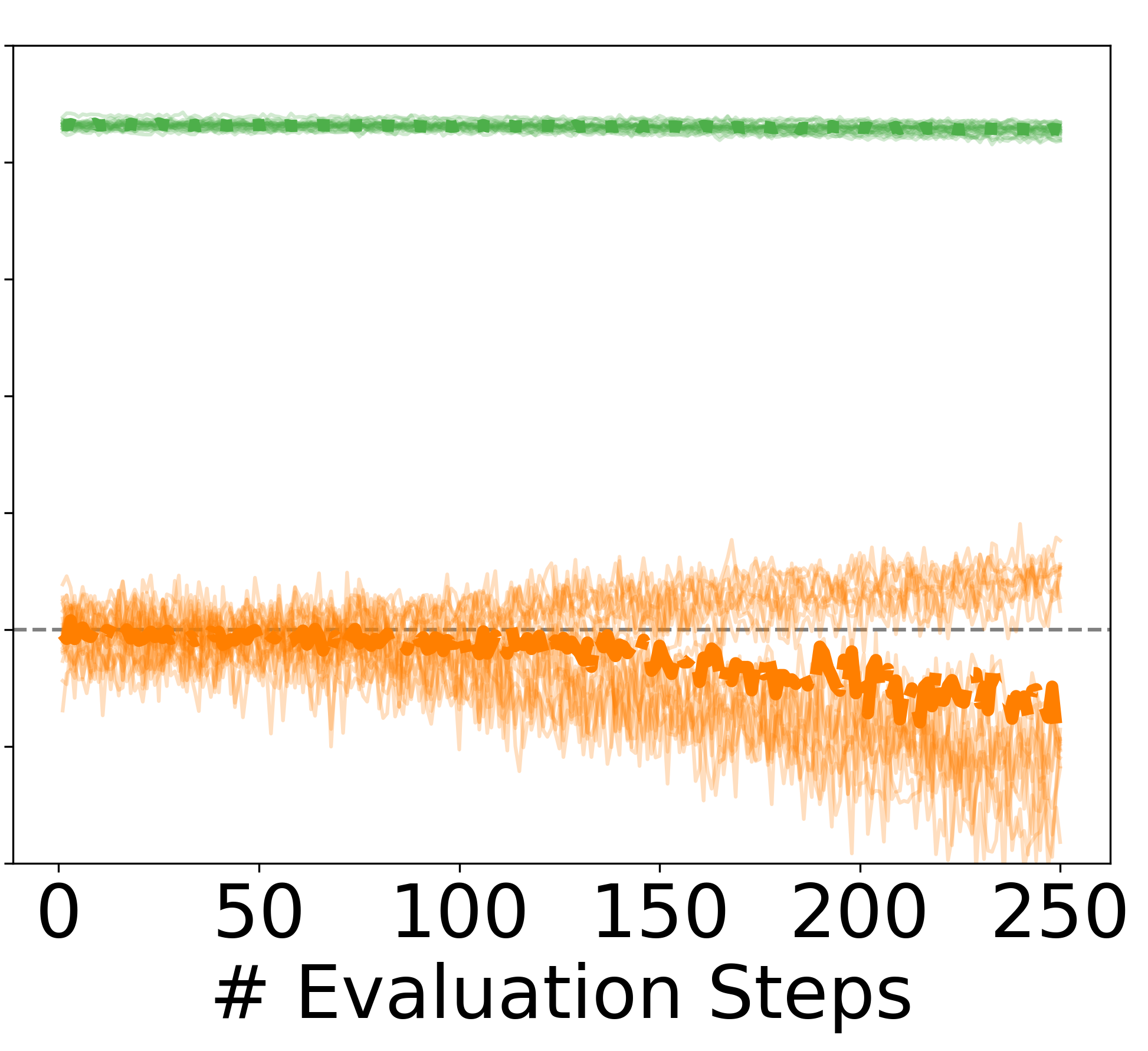}
    }
    
    \centering
    \subfigure[{\scriptsize\centering $\;\;\;$ COMPAS, \textsc{Pruning},$\;\;\;$ SPD}]{
        \includegraphics[width=0.25\linewidth]{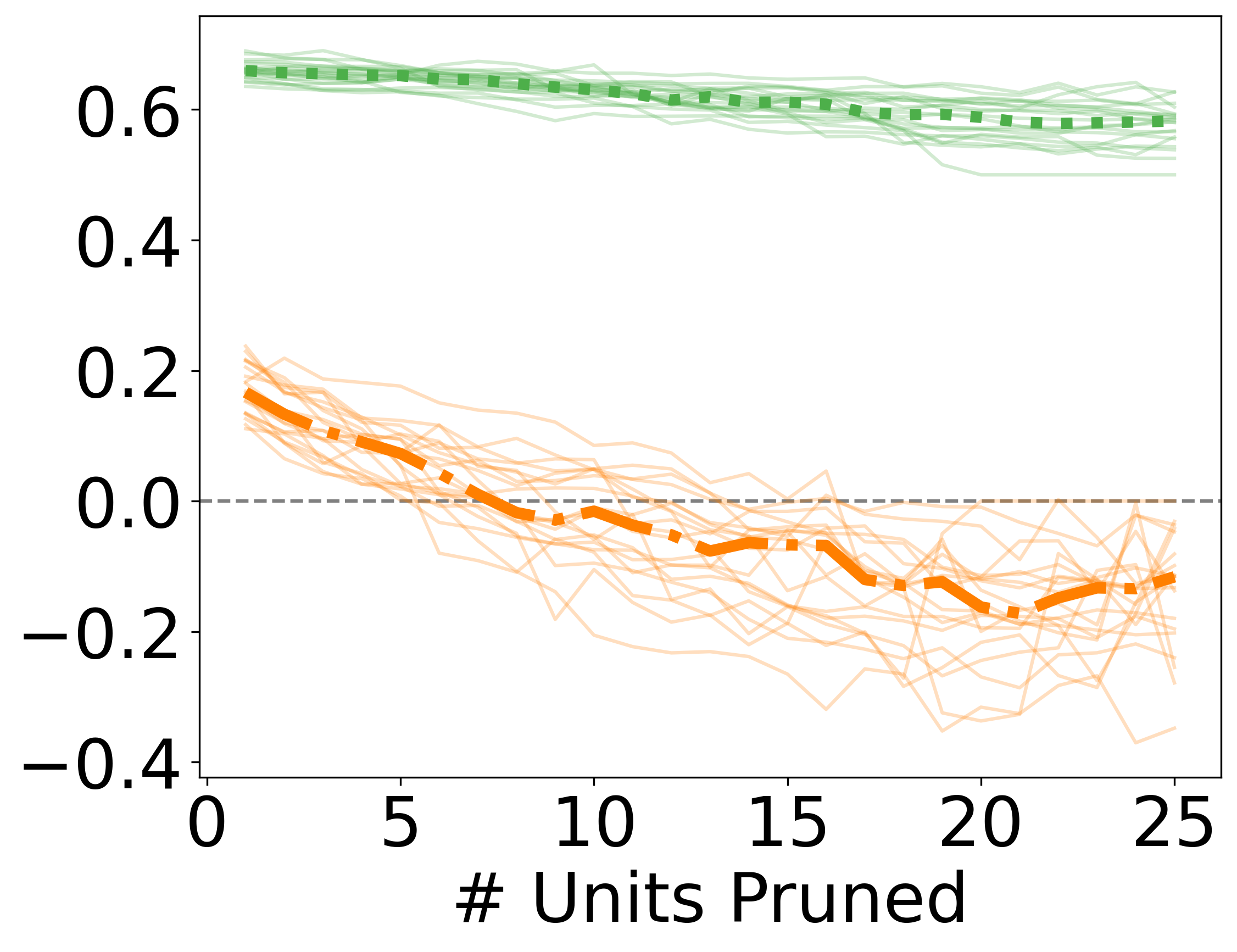}
    }
    \subfigure[{\scriptsize\centering COMPAS, \textsc{Pruning}, EOD}]{    
        \includegraphics[width=0.21\linewidth]{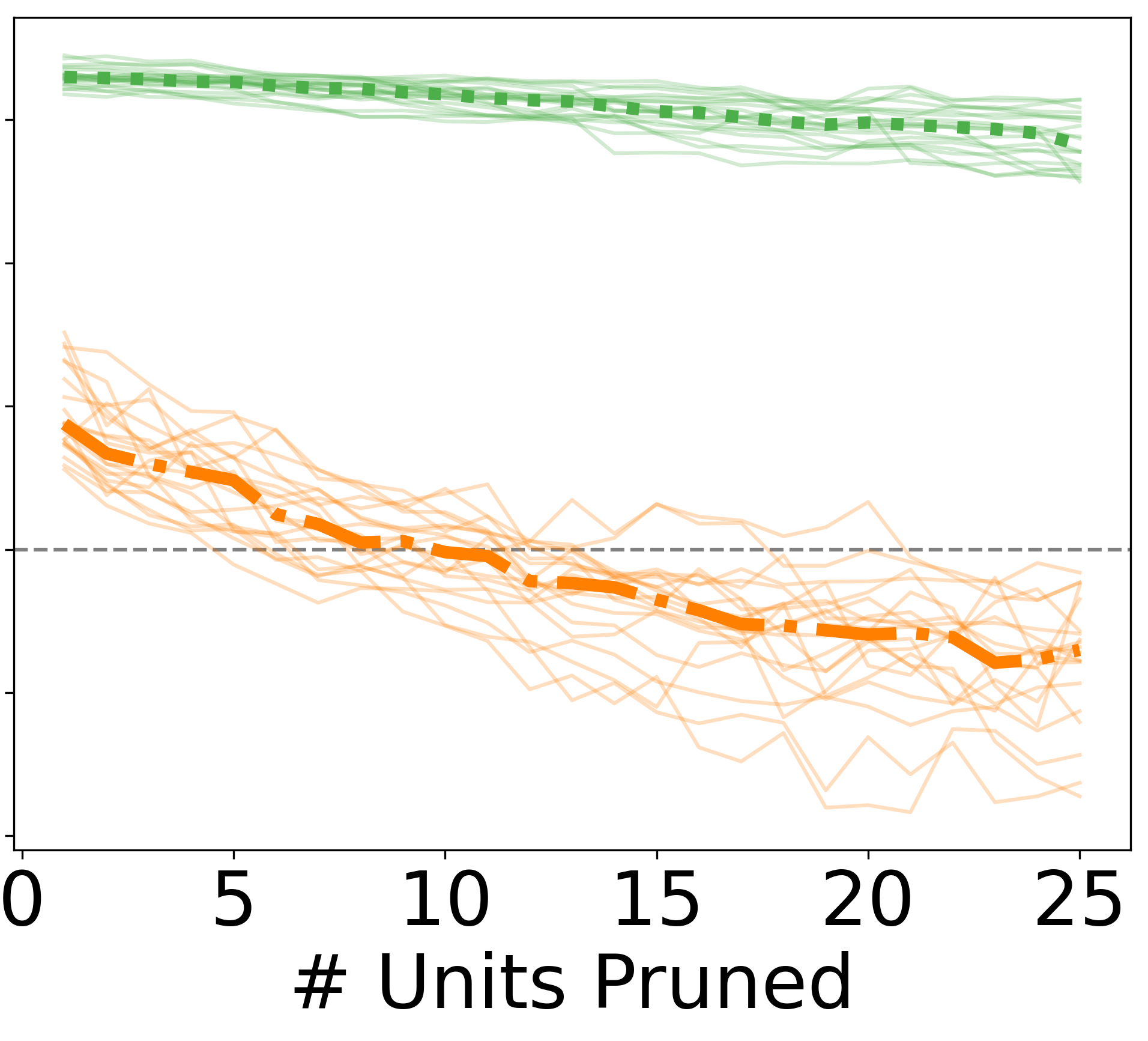}
    }
    \subfigure[{\scriptsize\centering COMPAS, \textsc{Bias GD/A}, SPD}]{
        \includegraphics[width=0.215\linewidth]{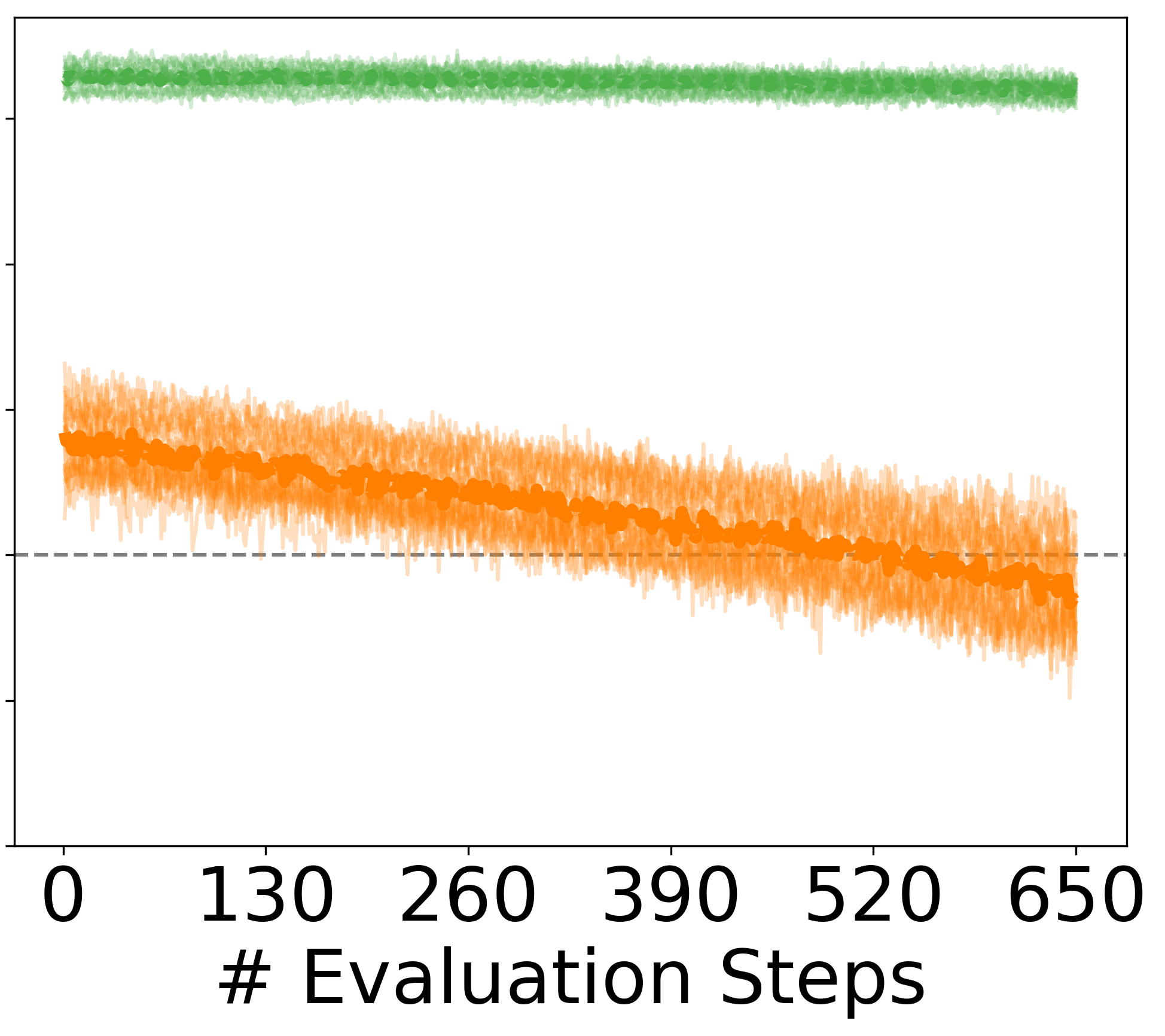}
    } 
    \subfigure[{\scriptsize\centering COMPAS, \textsc{Bias GD/A}, EOD}]{        
        \includegraphics[width=0.215\linewidth]{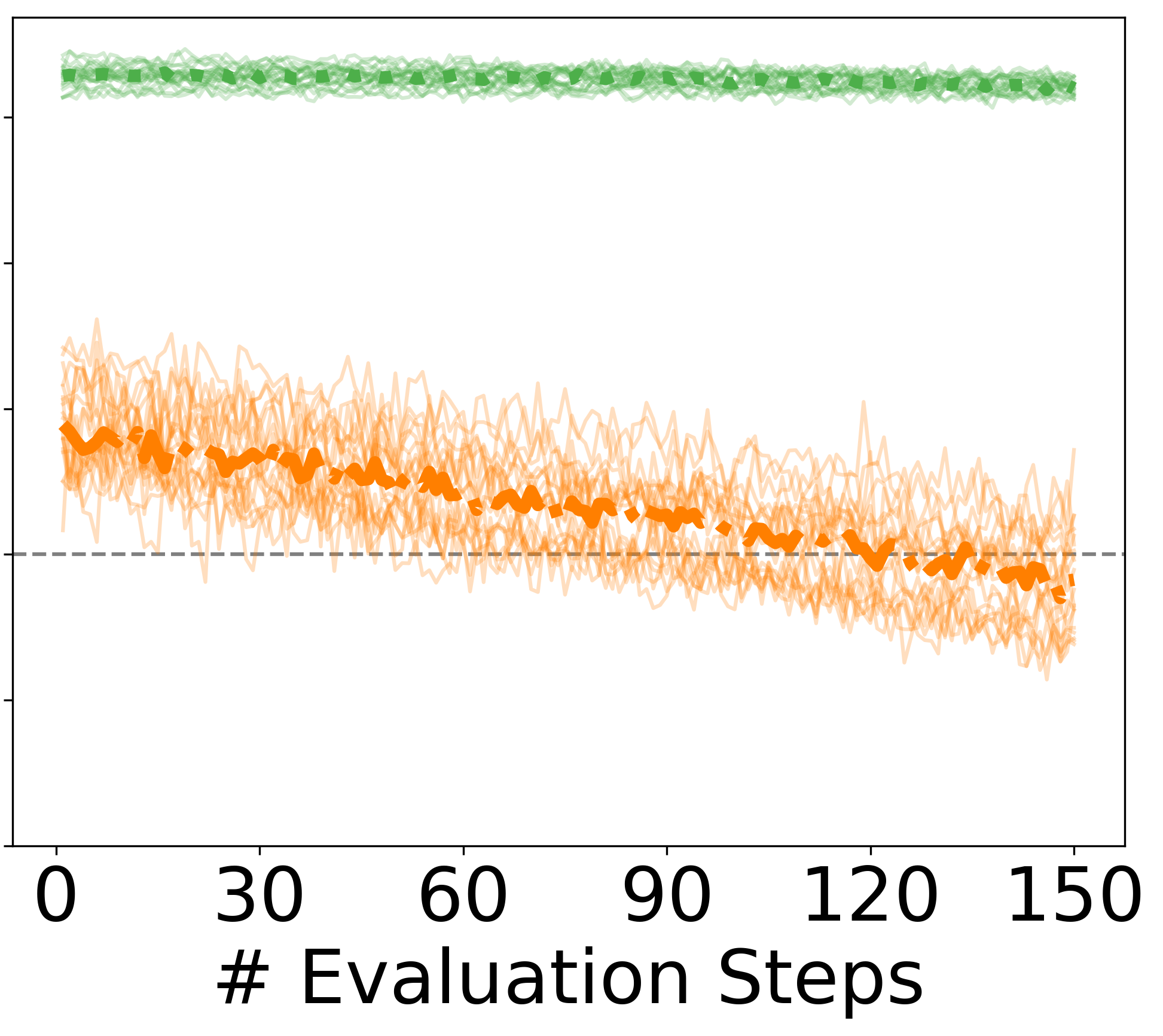}
    }
    \includegraphics[width=0.25\linewidth]{figures/legend_trajectories.png}
    \caption{Changes in the {\color{LegendaryOrange}\textbf{bias}}, given by the SPD (\emph{a, c, e, g, i, k}) and EOD (\emph{b, d, f, h, j, l}), and {\color{LegendaryGreen}\textbf{balanced accuracy}} of the neural network during pruning (\emph{a, b, e, f, i, j}) and bias gradient descent/ascent (\emph{c, d, g, h, k, l}). The results were obtained on Adult (\emph{top}), Bank (\emph{middle}), and \mbox{COMPAS} (\emph{bottom}) from 20 train-test splits. \textbf{Bold} lines correspond to the median across 20 seeds. During the bias GD/A, the model was evaluated several times an epoch. Notably, both procedures reduce bias without a considerable effect on accuracy.\label{fig:trajectories_full}}
\end{figure}

\end{document}